\newcommand{\bx}{\mathbf{x}}
\newcommand{\by}{\mathbf{y}}
\newcommand{\bz}{\mathbf{z}}
\newcommand{\bbeta}{\boldsymbol{\beta}}
\newcommand{\bdelta}{\boldsymbol{\delta}}
\newcommand{\blambda}{\boldsymbol{\lambda}}
\newcommand{\bpi}{\boldsymbol{\pi}}
\newcommand{\R}{\mathbb{R}}
\renewcommand{\P}{\mathbb{P}}
\newcommand{\E}{\mathbb{E}}
\newcommand{\indicator}{\mathbf{1}}
\newcommand{\Var}{\textnormal{Var}}
\DeclarePairedDelimiter{\braces}{\lbrace}{\rbrace}
\DeclarePairedDelimiter{\paren}{(}{)}
\DeclarePairedDelimiter{\abs}{|}{|}
\DeclarePairedDelimiter{\norm}{\|}{\|}
\newcommand{\data}[1][n]{\mathcal{D}_{#1}}
\newcommand{\partition}{\mathfrak{p}}
\newcommand{\cell}{\mathcal{C}}
\renewcommand{\dim}{d}
\newcommand{\coordindices}{[\dim]}
\newcommand{\hypercube}[1][d]{\braces{0,1}^{#1}}
\newcommand{\uniform}{\mu}
\newcommand{\measure}{\nu}
\newcommand{\risk}{\mathcal{R}}
\newcommand{\oraclerisk}{\mathcal{R}^*}
\newcommand\blfootnote[1]{%
  \begingroup
  \renewcommand\thefootnote{}\footnote{#1}%
  \addtocounter{footnote}{-1}%
  \endgroup
}
\newtheorem{theorem}{Theorem}[section]
\newtheorem{lemma}[theorem]{Lemma}
\newtheorem{proposition}[theorem]{Proposition}
\title{A cautionary tale on fitting decision trees to data from additive models: generalization lower bounds}
\date{\today}
\author[1]{Yan Shuo Tan\footnote{To whom correspondence should be addressed. E-mail: yanshuo@berkeley.edu}}
\author[2]{Abhineet Agarwal}
\author[1, 3, 4, 5]{Bin Yu}
\affil[1]{Department of Statistics, UC Berkeley}
\affil[2]{Department of Physics, UC Berkeley}
\affil[3]{Department of Electrical Engineering and Computer Sciences, UC Berkeley}
\affil[4]{Center for Computational Biology, UC Berkeley}
\affil[5]{Chan-Zuckerberg Biohub Intercampus Award Investigator}
\begin{document}

\maketitle

\begin{abstract}
    Decision trees are important both as interpretable models amenable to high-stakes decision-making, and as building blocks of ensemble methods such as random forests and gradient boosting. Their statistical properties, however, are not well understood. The most cited prior works have focused on deriving pointwise consistency guarantees for CART in a classical nonparametric regression setting. We take a different approach, and advocate studying the generalization performance of decision trees with respect to different generative regression models. This allows us to elicit their \emph{inductive bias}, that is, the assumptions the algorithms make (or do not make) to generalize to new data, thereby guiding practitioners on when and how to apply these methods. In this paper, we focus on sparse additive generative models, which have both low statistical complexity and some nonparametric flexibility. We prove a sharp squared error generalization lower bound for a large class of decision tree algorithms fitted to sparse additive models with $C^1$ component functions. This bound is surprisingly much worse than the minimax rate for estimating such sparse additive models. The inefficiency is due not to greediness, but to the loss in power for detecting global structure when we average responses solely over each leaf, an observation that suggests opportunities to improve tree-based algorithms, for example, by hierarchical shrinkage. To prove these bounds, we develop new technical machinery, establishing a novel connection between decision tree estimation and rate-distortion theory, a sub-field of information theory.
\end{abstract}

\section{Introduction}


Using decision trees for supervised learning has a long and storied history. First introduced by \citet{morgan1963problems}, the idea is simple: recursively split your covariate space along coordinate directions, and fit a piecewise constant model on the resulting partition. The \emph{adaptivity} of splits to structure in the data improves conciseness and statistical efficiency of tree models. Meanwhile, the \emph{greedy splitting} principle followed by most algorithms, including \cite{breiman1984classification}'s Classification and Regression Trees (CART), ensures computational tractability. More recently, there has also been growing interest in fitting optimal decision trees using mathematical programming or dynamic programming techniques \citep{lin2020generalized,aghaei2021strong}.\blfootnote{Code and documentation for easily reproducing the results are provided at \url{https://github.com/aagarwal1996/additive_trees}.}


Decision tree models are important for two main reasons. First, shallow decision trees are interpretable models \citep{rudin2021interpretable}: They can be implemented by hand, and they are easily described and visualized. While the precise definition and utility of interpretability has been a subject of much debate \citep{murdoch2019definitions,doshi2017towards,rudin2019stop}, all agree that it is an important supplement to prediction accuracy in high-stakes decision-making such as medical risk assessment and criminal justice. For this reason, decision trees have been widely applied in both areas \citep{steadman2000classification, kuppermann2009identification, letham2015interpretable, angelino2017learning}. Second, CART trees are used as the basic building blocks of ensemble machine learning algorithms such as random forests (RF) and gradient boosting \citep{breiman2001random, friedman2001greedy}. These algorithms are recognized as having state-of-the-art performance over a wide class of prediction problems \citep{caruana2006empirical,caruana2008empirical,fernandez2014we,olson2018data,hooker2021bridging}, and receive widespread use, given their implementation in popular machine learning  packages such as \texttt{ranger} \citep{wright2015ranger}, \texttt{scikit-learn} \citep{pedregosa2011scikit} and \texttt{xgboost} \citep{chen2016xgboost}. Random forests in particular have also shown promise in scientific applications, for example in discovering interactions in genomics \citep{boulesteix2012overview,basu2018iterative}.

Because of the centrality of decision trees in the machine learning edifice, it is all the more surprising that there has been relatively little theory on their statistical properties. In the regression setting, some of the most cited prior works have focused on deriving pointwise \emph{consistency} guarantees for CART when assuming that the conditional mean function is Lipschitz continuous \citep{biau2012analysis,wager2018}. Unfortunately, each is forced to modify the splitting criterion in the algorithm to ensure that the mesh of the learnt partition shrinks to zero.
\cite{scornet2015} proved the first consistency result for the unmodified CART algorithm by replacing the fully nonparametric regression model with an additive regression model \citep{friedman2001elements}. This generative assumption simplifies calculations by avoiding some of the complex dependencies between splits that may accumulate during recursive splitting. Moreover, it prevents the existence of locally optimal trees that are not globally optimal, which would otherwise trip up greedy methods such as CART. \cite{klusowski2020,klusowski2021universal} has extended this analysis to sparse additive models, showing that when the true conditional mean function depends only on a fixed subset of $s$ covariates, CART is still consistent even when the total number of covariates is allowed to grow exponentially in the sample size. This adaptivity to sparsity somewhat alleviates the curse of dimensionality, and partially explains why CART and RF are often preferred in practice to $k$-nearest neighbors.

As natural generalizations of linear models, additive models simultaneously have low statistical complexity and yet sufficient nonparametric flexibility required to describe some real world datasets well. Moreover, if the component functions are not too complex, additive models have aspects of interpretability \citep{rudin2021interpretable}. Unsurprisingly, they have accumulated a rich statistical literature \citep{hastie1986generalized,sadhanala2019additive}. While the previously discussed works have proved consistency for CART on additive regression models, it is also important to compute \emph{rate upper and lower bounds} for the generalization error of CART and other decision tree algorithms. This would allow us to compare their performance with that of specially tailored algorithms such as backfitting \citep{hastie1995penalized}, and hence understand whether the inductive biases of decision trees are able to fully exploit the structure present in additive models.


\subsection{Main contributions}

In this paper, we provide generalization lower bounds for a large class of decision trees, which we call ALA, when fitted to data generated from sparse additive models. We define an ALA tree as one that learns an \textbf{a}xis-aligned partition of the covariate space, and makes predictions by averaging the responses over each leaf. We call this second aspect \emph{\textbf{l}eaf-only \textbf{a}veraging}. In addition, we will assume for analytical reasons that our trees are \emph{honest}, which means that one sample is used to learn the partition, and a separate sample is used to estimate the averages over each leaf \citep{athey2016recursive}. CART is an example of an ALA tree, and so are most (but not all) decision tree algorithms used in practice. The reason we consider this level of generality is to remove the effect of greediness that has dominated the analysis of CART thus far, and to argue that leaf-only averaging subtly introduces its own inductive bias.

We show that when the true conditional mean function is a sparse additive model with $s$ $C^1$ univariate component functions, no honest ALA tree, even one that has oracle access to the true conditional mean function, can perform better than $\Omega\paren*{n^{-\frac{2}{s+2}}}$ in expected $\ell_2$ risk. This is the $\ell_2$ minimax rate for nonparametric estimation of $C^1$ functions in $s$ dimensions \citep{stone1982optimal}. In contrast, if each univariate component function in the model is assumed to be $C^1$, the minimax rate for sparse additive models scales as $\max\braces*{\frac{s\log(d/s)}{n},\frac{s}{n^{2/3}}}$ \citep{raskutti2012minimax}. As such, while it is possible to achieve a prescribed error tolerance with $\tilde{O}\paren*{s^{3/2}}$ samples via convex programming, ALA trees have a sample complexity that is at least exponential in $s$, which means that they needlessly suffer from the curse of dimensionality. The intuitive explanation for this inefficiency is that by ignoring information from other leaves when making a prediction, leaf-only averaging creates an inductive bias \emph{against} global structure.

As far as we know, this paper is the first to establish algorithm-specific lower bounds for CART or any other decision tree algorithm. More broadly, algorithm-specific lower bounds can be challenging in the machine learning literature because they require specialized techniques instead of relying on a general recipe (as is the case with minimax lower bounds). Additionally, we show that the rate lower bound is achievable using an oracle partition. We also obtain a lower bound for additive models over Boolean features, which surprisingly, has a very different form. We note that \cite{tang2018random} proved sufficient conditions under which honest random forest estimators are inconsistent for special regression functions using \cite{stone1977consistent}'s adversarial construction. This construction does not produce additive functions. Theirs is the only other work we know of that provides negative results for tree-based estimators. On the other hand, they do not compute lower bounds, and their conditions either involve unrealistic choices of hyperparameters, such as requiring each tree to only use a constant number of samples, or pertain to properties of trees after they are grown, such as upper bounds on the rate of shrinkage of leaf diameters. It is not clear if or when these conditions hold in practice.

Our results are obtained using novel technical machinery, which are based on two simple insights: First, we show that the variance term of the expected $\ell_2$ risk scales \emph{linearly} with the number of leaf nodes. Second, a tree model can be thought of as a \emph{lossy code}, in which the number of leaves is the \emph{size} of the code, while the bias term of the risk is simply its \emph{distortion}. This link to rate-distortion theory, a sub-field of information theory \citep{cover2012elements}, allows us to compute the optimal trade-off between bias and variance to obtain lower bounds. As a happy by-product of our analysis, the first insight yields a better understanding of cost-complexity pruning and minimum impurity decrease procedures that are commonly used with CART to prevent overfitting.

\subsection{Other related work}

Here, we discuss some other theoretical work on CART that is less directly related to this paper. \cite{syrgkanis2020estimation} proved generalization upper bounds for CART in a different setting. They considered Boolean features, and imposed some type of submodularity assumption on the conditional mean function. While this subsumes additive models, the authors did not give concrete examples of other models satisfying this assumption. \cite{scornet2020trees} returned to the additive model setting, and was able to compute explicit asymptotic formulas for the popular mean impurity decrease (MDI) feature importance score. \cite{behr2021provable} formulated a biologically-inspired discontinuous nonlinear regression model, and showed that CART trees can be used to do inference for the model. Finally, we refer the reader to several excellent survey papers for a fuller description of the literature \citep{loh2014fifty,biau2016random,hooker2021bridging}.

\section{Preliminaries} \label{sec:preliminaries}

We work with the standard regression framework in supervised learning, and assume a generative model
\begin{equation} \label{eq:regression_model}
    y = f(\bx) + \epsilon
\end{equation}
where the feature vector $\bx$ is drawn from a distribution $\measure$ on a subset $\mathcal{X} \subset \R^\dim$, while the responses $y$ are real-valued, and $\epsilon$ is a noise variable that is mean zero when conditioned on $\bx$.\footnote{Here, we note that we will use bold small-case roman letters such as $\bx$ and $\bz$ denote vectors, while regular letters will denote scalars, indices, and functions, depending on the context.} We assume that the noise is homoskedastic, and denote $\sigma^2 \coloneqq \E\braces*{\epsilon^2~|~\bx}$. In this paper, $\mathcal{X}$ will either be the unit-length cube $[0,1]^\dim$ or the hypercube $\hypercube$. An \emph{additive model} is one in which we can decompose the conditional mean function as the sum of univariate functions along each coordinate direction:
\begin{equation} \label{eq:additive_model}
    f(\bx) = \sum_{j=1}^\dim \phi_j(x_j).
\end{equation}

We are given a training set $\data = \braces{(\bx^{(1)},y^{(1)}),\ldots (\bx^{(n)},y^{(n)})}$ comprising independent samples that are drawn according to the model \eqref{eq:regression_model}.\footnote{Sample indices will be denoted using superscripts, while subscripts will be reserved for coordinate indices.} Throughout this paper and unless mentioned otherwise, we will use $\P$, $\E$ and $\Var$ to denote quantities related to the population distribution generating \eqref{eq:regression_model}.

A \emph{cell} $\cell \subset \mathcal{X}$ is a rectangular subset. If $\mathcal{X} = [0,1]^\dim$, this means that it can be written as a product of intervals:
$$
\cell = [a_1,b_1] \times [a_2,b_2] \times \cdots \times [a_d,b_d].
$$
If $\mathcal{X} = \hypercube$, this means that it is a subcube of the form $\cell(S,\bz) = \braces{\bx \in \hypercube ~\colon~ x_j = z_j~\text{for}~j \in S}$ where $S \subset \coordindices$ is a subset of coordinate indices. Given a cell $\cell$ and a training set $\data$, let $N(\cell)\coloneqq \abs*{\braces*{i ~\colon~\bx^{(i)} \in \cell}}$ denote the number of samples in the cell.

A \emph{partition} $\partition = \braces{\cell_1,\ldots,\cell_j}$ is a collection of cells with disjoint interiors,  whose union is the entire space $\mathcal{X}$. Given the training set $\data$, every partition yields an estimator $\hat f(-; \partition, \data)$ for $f$ via \emph{leaf-only averaging}: For every input $\bx$, the estimator outputs the mean response over the cell containing $\bx$. In other words, we define
$$
\hat f(\bx; \partition, \data) \coloneqq \sum_{\cell \in \partition} \paren*{ \frac{1}{N(\cell)}\sum_{\bx^{(i)} \in \cell} y^{(i)}}~ \indicator\braces{\bx \in \cell}.
$$
We will use the convention that if $N(\cell)=0$, then we set $\frac{1}{N(\cell)}\sum_{\bx^{(i)} \in \cell} y^{(i)} = 0$. We call such an estimator an $\emph{ALA tree}$.\footnote{Certain partitions cannot be obtained by recursive binary partitioning. This distinction is not important for our analysis, so we will slightly abuse terminology in calling these estimators trees.}

Note that decision tree algorithms that make non-axis-aligned splits do not yield partitions, though this is not the case for CART and most other algorithms popularly used today. In this definition, we have also kept the partition fixed, whereas decision tree algorithms learn a data-adaptive partition. Having a fixed partition, however, is in keeping with our setting of honest decision trees: We assume that the partition $\partition = \partition(\data[m]')$ has been learnt using a separate dataset $\data[m]'$ that we are conditioning on. Furthermore, we note that any lower bounds that hold conditionally on $\data[m]'$ will also hold unconditionally.

The \emph{squared error risk}, or \emph{generalization error} of an estimator $\hat{f}$ for $f$ is defined as
\begin{equation*}
    \risk(\hat{f}) \coloneqq \E_{\bx \sim \measure}\braces*{\left(\hat f_n (\bx) - f(\bx)\right)^2}.
\end{equation*}
We are interested in the smallest possible risk of an ALA tree. To rule out irregularities that happen when some cell $\cell$ does not contain any samples from the training set $\data$, we need to ensure that the cells are not too small. We say that a partition $\partition$ is \emph{permissible} if for every cell $\cell \in \partition$, we have $\nu\braces{\cell} \geq \frac{1}{n}$. This is a reasonable assumption, as we should expect each cell to contain at least one sample point. Finally, given a conditional mean function $f$, we define the \emph{oracle expected risk} for ALA trees to be 
\begin{equation} \label{eq:definition_of_oracle_risk}
    \oraclerisk(f,\measure,n) \coloneqq \inf_{\partition} \E\braces*{R(\hat{f}(-;\partition,\data)}
\end{equation}
where the infimum is taken over all permissible partitions.

\section{A bias-variance risk decomposition for ALA trees} \label{sec:bias_variance}

Our main results rely on two key ingredients: A bias-variance decomposition of the expected risk for ALA trees, and a connection to information theory. We state the former as follows.

\begin{theorem}[Bias-variance decomposition of expected risk] \label{thm:bias_variance_permissible}
    Assume the regression model \eqref{eq:regression_model}. Given an permissible partition $\partition$ and a training set $\data$, the expected risk satisfies the following lower and upper bounds:
    \begin{equation} \label{eq:simplified_bias_variance_lower_bound}
        \E \risk(\hat f(-; \partition, \data)) \geq \sum_{\cell \in \partition} \Var\braces{f(\bx)~|~\bx \in \cell}\measure\braces{\cell} + \frac{\abs*{\partition}\sigma^2}{2n},
    \end{equation}
    \begin{equation} \label{eq:simplified_bias_variance_upper_bound}
        \E \risk(\hat f(-; \partition, \data)) \leq 7\sum_{\cell \in \partition} \Var\braces{f(\bx)~|~\bx \in \cell}\measure\braces{\cell} + \frac{6\abs*{\partition}\sigma^2}{n} + E(\partition),
    \end{equation}
    where
    \begin{equation*}
        E(\partition) = \sum_{\cell \in \partition} \E\braces{f(\bx)~|~\bx \in \cell}^2 \left(1-\measure\braces{\cell}\right)^n \measure\braces{\cell}.
    \end{equation*}
\end{theorem}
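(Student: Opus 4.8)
The plan is to condition on the training design, reduce to a per-cell computation for the leaf average, and then reassemble. Write $\bar y_\cell$ for the common value of $\hat f(\bx;\partition,\data)$ on a cell $\cell$, so $\bar y_\cell = \frac{1}{N(\cell)}\sum_{\bx^{(i)}\in\cell} y^{(i)}$ when $N(\cell)\geq 1$ and $\bar y_\cell=0$ otherwise. Since the test point $\bx\sim\measure$ is independent of $\data$ and the integrand is nonnegative, Tonelli's theorem gives
\[
\E\,\risk(\hat f(-;\partition,\data)) \;=\; \sum_{\cell\in\partition}\measure\{\cell\}\;\E_{\bx\sim\measure(\cdot\mid\cell)}\;\E_\data\!\big[(\bar y_\cell - f(\bx))^2\big].
\]
The structural fact that makes the inner expectation tractable, and the one place \emph{honesty} is used, is that conditionally on $N(\cell)=k$ the points of $\data$ falling into $\cell$ are $k$ i.i.d.\ draws from $\measure(\cdot\mid\cell)$ carrying independent mean-zero noise; hence for $k\geq 1$ we have $\E[\bar y_\cell\mid N(\cell)=k]=\E\{f(\bx)\mid\bx\in\cell\}$ and $\Var[\bar y_\cell\mid N(\cell)=k]=(\Var\{f(\bx)\mid\bx\in\cell\}+\sigma^2)/k$, while $\{N(\cell)=0\}$ has probability $(1-\measure\{\cell\})^n$ because $N(\cell)\sim\mathrm{Binomial}(n,\measure\{\cell\})$.

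Next I would perform the bias–variance split inside a cell: for fixed $\bx\in\cell$, conditioning on $N(\cell)$ and using $\E[(\bar y_\cell-f(\bx))^2\mid N(\cell)=k]=\Var[\bar y_\cell\mid k]+(\E[\bar y_\cell\mid k]-f(\bx))^2$ for $k\ge1$, then averaging over $\bx\sim\measure(\cdot\mid\cell)$ so the squared-bias term integrates to $\Var\{f(\bx)\mid\bx\in\cell\}$, and treating the event $\{N(\cell)=0\}$ separately, where $\bar y_\cell=0$ contributes $\E\{f(\bx)\mid\bx\in\cell\}^2$. Summing over $\cell$ yields the exact identity
\[
\E\,\risk(\hat f) = \sum_{\cell\in\partition}\Var\{f(\bx)\mid\bx\in\cell\}\measure\{\cell\} + \sum_{\cell\in\partition}\!\big(\Var\{f(\bx)\mid\bx\in\cell\}+\sigma^2\big)\measure\{\cell\}\,\E\!\left[\frac{\indicator\{N(\cell)\geq 1\}}{N(\cell)}\right] + E(\partition),
\]
and both inequalities in the theorem fall out once the binomial quantity $\E[\indicator\{N\geq1\}/N]$, $N\sim\mathrm{Binomial}(n,p)$, is sandwiched.

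For the upper bound I would use the elementary inequality $\indicator\{N\geq1\}/N\leq 2/(N+1)$ together with the classical identity $\E[1/(N+1)]=(1-(1-p)^{n+1})/((n+1)p)$ to get $\E[\indicator\{N(\cell)\geq1\}/N(\cell)]\leq 2/(n\measure\{\cell\})$; substituting this into the middle sum produces $\tfrac2n\sum_\cell\Var\{f(\bx)\mid\bx\in\cell\}+2\abs{\partition}\sigma^2/n$, and permissibility ($\measure\{\cell\}\geq 1/n$, hence $1/n\leq\measure\{\cell\}$) lets me absorb the first piece into $\sum_\cell\Var\{f(\bx)\mid\bx\in\cell\}\measure\{\cell\}$, giving \eqref{eq:simplified_bias_variance_upper_bound} (in fact with constants no worse than $3$ and $2$). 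For the lower bound I would discard the nonnegative terms $\sum_\cell\Var\{f(\bx)\mid\bx\in\cell\}\measure\{\cell\}\,\E[\cdots]$ and $E(\partition)$ except for the $\sigma^2$ contribution needed, and bound $\E[\indicator\{N\geq1\}/N]$ from below by conditioning on $\{N\geq1\}$ and applying Jensen's inequality to the convex map $x\mapsto 1/x$, i.e.\ $\E[1/N\mid N\geq1]\geq 1/\E[N\mid N\geq1]$, which gives $\E[\indicator\{N(\cell)\geq1\}/N(\cell)]\geq (1-(1-\measure\{\cell\})^n)^2/(n\measure\{\cell\})$; permissibility forces $(1-\measure\{\cell\})^n\leq(1-1/n)^n\leq e^{-1}$, so this is at least a universal constant times $1/(n\measure\{\cell\})$, and summing over cells delivers the $\abs{\partition}\sigma^2/(2n)$ term after a careful accounting of the constant.

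The main obstacle is the lower bound on $\E[\indicator\{N(\cell)\geq1\}/N(\cell)]$: the naive guess $1/\E[N(\cell)]=1/(n\measure\{\cell\})$ is not valid because of the atom at $N(\cell)=0$, so one must argue conditionally on $\{N(\cell)\geq1\}$ and separately control $\P(N(\cell)=0)=(1-\measure\{\cell\})^n$ as well as the conditional mean $\E[N(\cell)\mid N(\cell)\geq1]$; this is precisely the role of permissibility, which keeps the empty-cell probability bounded away from $1$ and $\E[N(\cell)\mid N(\cell)\geq1]$ comparable to $n\measure\{\cell\}$. Everything else is routine: the Tonelli interchange, the i.i.d.-within-a-cell reduction (the step that genuinely requires honest splitting), the two binomial moment identities, and propagating permissibility through the constants.
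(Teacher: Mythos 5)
Your proposal follows the same route as the paper's own argument (Proposition \ref{prop:bias_variance_risk} in Appendix \ref{sec:proof_of_bias_variance}): the identical per-cell conditioning on $N(\cell)$, the identical exact identity $\E\risk(\hat f)=\sum_{\cell}\Var\{f\mid\cell\}\nu\{\cell\}+\sum_{\cell}(\Var\{f\mid\cell\}+\sigma^2)\nu\{\cell\}\,\E[\indicator\{N(\cell)\geq1\}/N(\cell)]+E(\partition)$, and the same strategy of sandwiching the binomial reciprocal moment. Your upper bound, via $\indicator\{N\geq1\}/N\leq 2/(N+1)$ and the exact formula for $\E[1/(N+1)]$, is correct and in fact sharper than the paper's Chebyshev argument (you get $2/(np)$ where the paper gets $6/(np)$), so \eqref{eq:simplified_bias_variance_upper_bound} follows with room to spare.

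The genuine gap is the constant in the lower bound, and no ``careful accounting'' will close it. Your Jensen step gives $\E[\indicator\{N\geq1\}/N]\geq \P\{N\geq1\}^2/(np)$, and permissibility only yields $\P\{N\geq1\}^2\geq(1-(1-1/n)^n)^2\geq(1-e^{-1})^2\approx 0.40$, strictly below the $1/2$ you need per cell; the terms you discard ($E(\partition)$ and the $\Var$ part of the middle sum) carry no $\sigma^2$ and cannot make up the deficit. Indeed the constant $1/2$ is unattainable: take $f\equiv 0$ and a permissible partition into $n$ cells of measure $1/n$ each, so that $\E\risk=\sigma^2\,\E[\indicator\{N\geq1\}/N]$ with $N\sim\mathrm{Binomial}(n,1/n)$; this tends to $e^{-1}\sum_{k\geq1}(k\cdot k!)^{-1}\approx 0.485\,\sigma^2<\sigma^2/2$ (and is already below $\sigma^2/2$ at $n=20$), so \eqref{eq:simplified_bias_variance_lower_bound} fails as stated by a few percent. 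The paper's own proof masks the same defect: it writes Cauchy--Schwarz as $\P\{N\neq 0\}\leq\E[N]\,\E[\indicator\{N\neq0\}/N]$, dropping the square on the left, and thereby ``derives'' the unjustified bound $\P\{N\neq0\}/\E[N]$ in place of the correct $\P\{N\neq0\}^2/\E[N]$ --- which coincides with your Jensen bound. Carried out honestly, your argument proves the theorem with $1/2$ replaced by $(1-e^{-1})^2$; since this constant only propagates into the constants of Lemma \ref{lem:rate_distortion_lower_bound} and the downstream rate bounds, you should simply state the weaker constant rather than claim $1/2$.
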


We make a few remarks about the above theorem. First, we draw attention to its generality: It holds for any conditional mean function and any distribution $\measure$ on $\mathcal{X}$, where $\mathcal{X}$ is allowed to be any measurable subset of $\R^d$. In fact, inspecting the proof shows that we do not even require the partition to be axis-aligned.

Next, observe that the lower and upper bounds match up to constant factors and an additive error term $E$ for the upper bound. This term is due to each cell receiving possibly zero samples from the training set, and thus can be made arbitrarily small in comparison with the main terms by further constraining the minimum volume of cells in the partition.

The first main term can be thought of as the approximation error or bias, and has the following equivalent representations:
$$
\sum_{\cell \in \partition} \Var\braces{f(\bx)~|~\bx \in \cell}\measure\braces{\cell} = \E\braces*{\Var\braces*{f(\bx)~|~\bx \in \cell}} = \E\braces*{\paren*{f(\bx)-\bar{f}_{\partition}(\bx)}^2},
$$
where $\bar{f}_{\partition}$ is the function that takes the value of the conditional mean of $f$ over each cell. In other words, this term is the expected mean square error of the ALA tree if we had infinite data.

The second main term is the contribution from variance, and can be traced to using empirical averages over each cell to estimate the conditional means. The form of this term is striking: It scales linearly with the size of the partition, in direct analogy with the penalty term in cost-complexity pruning \citep{friedman2001elements}. Furthermore, it precisely quantifies the trade off between bias and variance when splitting a cell $\cell$ in the partition into two children $\cell_L$ and $\cell_R$. The gain in variance is of the order $\frac{\sigma^2}{n}$, while the reduction in bias is
$$
\Var\braces{f(\bx)~|~\bx \in \cell}\measure\braces{\cell} - \Var\braces{f(\bx)~|~\bx \in \cell_L}\measure\braces{\cell_L} - \Var\braces{f(\bx)~|~\bx \in \cell_R}\measure\braces{\cell_R}.
$$
One can check that this is the population version of the weighted impurity decrease of this split, which is the quantity used to determine splits in CART, and also the value compared against a threshold in early stopping with the minimum impurity decrease criterion. These observations show that both these methods for preventing overfitting in CART attempt to optimize an objective function that is a weighted combination of plug-in estimates of the bias and variance terms in the expected risk decomposition for an honest tree.

The proof of Theorem \ref{thm:bias_variance_permissible} and that of a tighter but more complicated version of the decomposition can both be found in Appendix \ref{sec:proof_of_bias_variance}. In the next two sections, we will see how the decomposition can be used in conjunction with rate distortion theory to yield lower bounds for additive models.

\section{A connection of decision tree estimation to rate-distortion theory} \label{sec:rate_distortion}


The second ingredient we need is supplied by rate-distortion theory. We start by recalling some definitions from \cite{cover2012elements}. We will use $H(-)$, $h(-)$ and $I(-;-)$ to denote discrete entropy, differential entropy, and mutual information respectively.
Let $\mathcal{X}$ be a subset of $\R^\dim$ as before. Given a vector $\bbeta \in \R^d$, we denote the associated weighted Euclidean norm on $\mathcal{X}$ via $\norm{\bx-\by}_{\bbeta}^2 \coloneqq \sum_{j=1}^d \beta_j^2 \paren{x_j-y_j}^2$. Now let $p$ denote a joint distribution on $\mathcal{X}\times\mathcal{X}$. The \emph{distortion} of $p$ with respect to $\norm{-}_{\bbeta}$ is defined as
$$
\delta(p;\bbeta) \coloneqq \E_{(\bx,\hat{\bx}) \sim p}\braces*{\norm*{\bx-\hat\bx}_{\bbeta}^2}.
$$
The \emph{rate distortion function} of the marginal $p_{\bx}$ is defined by
$$
R(D;p_{\bx},\bbeta) \coloneqq \inf_{p_{\hat \bx | \bx}} I(\bx;\hat\bx)
$$
where the infimum is taken over all conditional distributions such that $\delta(p_\bx p_{\hat \bx |\bx};\bbeta) \leq D$.

In rate-distortion theory, the rate distortion function characterizes the length of a binary code needed to encode a source so that the distortion is not too large. In this paper, it clarifies the trade-off between the bias and variance terms in the decomposition  \eqref{eq:simplified_bias_variance_lower_bound}. Under some independence conditions, we show that the bias term is equivalent to a distortion, while the size of the partition occurring in the variance term is bounded from below by the rate of this distortion. More precisely, we have the following lemma.

\begin{lemma}[Rate-distortion bound for oracle expected risk] \label{lem:rate_distortion_lower_bound}
    Assume the regression model \eqref{eq:regression_model}, and that $\mathcal{X} = \hypercube$ or $\mathcal{X} = [0,1]^\dim$. Furthermore, assume that the covariates are independent, and that the conditional mean function is linear: $f(\bx) = \bbeta^T \bx$. Then the oracle expected risk is lower bounded by
    \begin{equation} \label{eq:rate_distortion_lower_bound_for_linear}
        \oraclerisk(f,\nu,n) \geq \frac{1}{2}\inf_{D > 0} \braces*{D + \frac{\sigma^2 2^{R(D;\measure,\bbeta)}}{n}}.
    \end{equation}
\end{lemma}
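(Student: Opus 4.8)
The plan is to combine the lower bound half of Theorem~\ref{thm:bias_variance_permissible} with the observation that, under the stated hypotheses, the first main term (the bias) behaves exactly like a distortion and the number of leaves behaves like the exponential of a rate. Fix any permissible partition $\partition$. By \eqref{eq:simplified_bias_variance_lower_bound},
\begin{equation*}
    \E \risk(\hat f(-;\partition,\data)) \geq \sum_{\cell \in \partition} \Var\braces{f(\bx)~|~\bx\in\cell}\measure\braces{\cell} + \frac{\abs{\partition}\sigma^2}{2n}.
\end{equation*}
The first step is to rewrite the bias term. Since $f(\bx) = \bbeta^T\bx$ is linear, for a cell $\cell$ we have $\Var\braces{f(\bx)~|~\bx\in\cell} = \E\braces*{\norm*{\bx - \E[\bx~|~\bx\in\cell]}_{\bbeta}^2 ~\big|~\bx\in\cell}$ — the variance of the linear image is the $\bbeta$-weighted variance of $\bx$ about its conditional mean, using independence of coordinates to kill cross terms (this is where the independence and linearity assumptions enter). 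Summing over cells and weighting by $\measure\braces{\cell}$, the bias term equals $\E_{\bx\sim\measure}\braces*{\norm{\bx - c(\bx)}_{\bbeta}^2}$, where $c(\bx)$ is the conditional mean of the cell containing $\bx$.

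The second step is the coding/rate-distortion translation. Define a random variable $\hat\bx := c(\bx)$, i.e.\ the ``reconstruction'' that maps $\bx$ to the centroid of its leaf; this induces a joint law $p$ on $\mathcal{X}\times\mathcal{X}$ with marginal $p_{\bx} = \measure$. Its distortion is precisely $\delta(p;\bbeta) = \E\braces*{\norm{\bx-\hat\bx}_{\bbeta}^2} = $ the bias term, call it $D_{\partition}$. Since $\hat\bx$ is a deterministic function of $\bx$ taking at most $\abs{\partition}$ values, $I(\bx;\hat\bx) = H(\hat\bx) \leq \log_2\abs{\partition}$. By the definition of the rate-distortion function $R(D;\measure,\bbeta)$ as the infimum of $I(\bx;\hat\bx)$ over all conditionals achieving distortion $\leq D$, and since our particular $\hat\bx$ achieves distortion $D_{\partition}$, we get $R(D_{\partition};\measure,\bbeta) \leq H(\hat\bx) \leq \log_2\abs{\partition}$, hence $\abs{\partition} \geq 2^{R(D_{\partition};\measure,\bbeta)}$ (using that $D \mapsto R(D;\measure,\bbeta)$ is nonincreasing, so this bound survives if we only know the distortion is $\leq D_{\partition}$). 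Plugging back in,
\begin{equation*}
    \E \risk(\hat f(-;\partition,\data)) \geq D_{\partition} + \frac{\sigma^2 2^{R(D_{\partition};\measure,\bbeta)}}{n} \geq \inf_{D>0}\braces*{D + \frac{\sigma^2 2^{R(D;\measure,\bbeta)}}{n}}.
\end{equation*}
Taking the infimum over all permissible partitions on the left gives $\oraclerisk(f,\measure,n) \geq \inf_{D>0}\braces*{D + \sigma^2 2^{R(D;\measure,\bbeta)}/n}$. The extra factor of $\tfrac12$ in the statement is harmless slack — it can be absorbed by noting the lower bound already holds without it, or it comes from a slightly more careful accounting (e.g.\ if one wants to handle the $N(\cell)=0$ convention or match a two-sided version), so I would simply weaken to $\tfrac12$ at the end.

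The main obstacle I anticipate is the clean identification $\Var\braces{f(\bx)~|~\bx\in\cell} = \E\braces*{\norm{\bx-\E[\bx~|~\bx\in\cell]}_{\bbeta}^2~|~\bx\in\cell}$ and its compatibility with the \emph{weighted} norm $\norm{-}_{\bbeta}$ used to define the rate-distortion function: one must check that conditioning on a rectangular cell preserves coordinate independence (it does, since a cell is a product set) so that $\Var[\bbeta^T\bx~|~\bx\in\cell] = \sum_j \beta_j^2 \Var[x_j~|~x_j\in\cell_j]$ with no covariance contributions, matching $\delta(p;\bbeta)$ term by term. A secondary subtlety is ensuring that the minimizing reconstruction point for the rate-distortion problem is indeed the conditional mean — true for squared (weighted-Euclidean) distortion — and that permissibility ($\measure\braces{\cell}\geq 1/n$) is not actually needed for this particular bound (it only rules out the degenerate $N(\cell)=0$ terms, which \eqref{eq:simplified_bias_variance_lower_bound} already handles), so the lemma's hypotheses are if anything stronger than required. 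Everything else is bookkeeping.
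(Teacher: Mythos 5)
Your overall route is the same as the paper's: lower-bound the expected risk by \eqref{eq:simplified_bias_variance_lower_bound}, identify the bias term with the distortion of the ``quantizer'' that maps $\bx$ to a representative point of its cell, bound $\log\abs*{\partition} \geq H(\hat\bx) \geq I(\bx;\hat\bx) \geq R(\cdot;\measure,\bbeta)$, and take infima. However, there is a genuine gap in your choice of reconstruction point. The rate-distortion function here is defined over joint distributions on $\mathcal{X}\times\mathcal{X}$, so the comparison $I(\bx;\hat\bx)\geq R(D_{\partition};\measure,\bbeta)$ is only valid if $\hat\bx$ takes values in $\mathcal{X}$. Your choice $\hat\bx = \E\braces{\bx~|~\bx\in\cell}$ works when $\mathcal{X}=[0,1]^\dim$ (cells are convex, so the conditional mean lies in the cell, and you indeed get the identity without any factor of $1/2$, as the paper notes in a footnote). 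But when $\mathcal{X}=\hypercube$ the conditional mean of a subcube is generally not a Boolean vector (its free coordinates equal $\pi_j\notin\braces{0,1}$), so your joint law is not supported on $\mathcal{X}\times\mathcal{X}$ and the inequality against $R(D;\measure,\bbeta)$ does not follow. This is not a pedantic point: the downstream Bernoulli bound $R(D;\nu_0,1)\geq H(\pi_0)-H(D)$ presupposes a binary reconstruction alphabet, and would fail if real-valued reconstructions were allowed.

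The paper avoids this by symmetrizing: $\Var\braces{f(\bx)~|~\bx\in\cell} = \tfrac12\E\braces*{\norm*{\bx-\bx'}_{\bbeta}^2~|~\bx,\bx'\in\cell} \geq \tfrac12\,\E\braces*{\norm*{\bx-\bz(\cell)}_{\bbeta}^2~|~\bx\in\cell}$ with $\bz(\cell)=\arg\min_{\bx'\in\cell}\E\braces*{\norm{\bx-\bx'}_{\bbeta}^2~|~\bx\in\cell}$, which forces the reconstruction into $\cell\subset\mathcal{X}$ at the cost of exactly the factor $1/2$ that you dismiss as ``harmless slack.'' So the fix is easy, but you should not treat the $1/2$ as cosmetic: it is the price of constraining the codebook to the covariate space, and it (together with the $2n$ in the denominator of the variance term of \eqref{eq:simplified_bias_variance_lower_bound}, which your final display silently drops) is what produces the prefactor in \eqref{eq:rate_distortion_lower_bound_for_linear}. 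Your remark that the minimizing reconstruction for squared distortion is the conditional mean is true only for an unconstrained reconstruction alphabet, which is precisely what is not available in the Boolean case.
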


\begin{proof}
    Consider some permissible partition $\partition$. For any cell $\cell \in \partition$, notice that the conditional covariate distribution $\measure|_{\cell}$ also has independent covariates. Let $\bx'$ be an independent copy of $\bx$. Using independence, we compute
    \begin{align} \label{eq:lower_bound_conditional_variane_rate}
        \Var\braces*{f(\bx)~|~\bx \in \cell} & = \frac{1}{2}\E\braces*{\paren*{\bbeta^T\paren*{\bx - \bx'}}^2~|~\bx,\bx' \in \cell} \nonumber\\
        & = \frac{1}{2}\E\braces*{\sum_{j=1}^d \beta_j^2 \paren*{x_j - x_j'}^2~|~\bx,\bx' \in \cell} \nonumber\\
        & = \frac{1}{2}\E\braces*{\norm{\bx - \bx'}_{\bbeta}^2~|~\bx,\bx' \in \cell} \nonumber\\
        & \geq \frac{1}{2}\E\braces*{\norm{\bx - \bz(\cell)}_{\bbeta}^2~|~\bx \in \cell},
    \end{align}
    where $\bz(\cell) \coloneqq \arg\min_{\bx' \in \cell} \E\braces*{\norm{\bx - \bx'}_{\bbeta}^2~|~\bx \in \cell}$.\footnote{$\bz(\cell)$ is the cell centroid when $\mathcal{X}$ is the unit length cube, but not when it is the Boolean cube. Furthermore, equality actually holds without the factor of $1/2$ in the former case.}
    To define a conditional distribution, for each $\bx$, we let $p_{\hat\bx|\bx}$ be a Dirac mass at $\bz(\cell(\bx))$, where $\cell(\bx)$ is the cell in $\partition$ containing $\bx$. Then the bias term in \eqref{eq:simplified_bias_variance_lower_bound} can be lower bounded by the distortion for the joint distribution $p = \nu p_{\hat\bx|\bx}$:
    $$
    \sum_{\cell \in \partition} \Var\braces{f(\bx)~|~\bx \in \cell}\measure\braces{\cell} \geq \frac{\delta(p;\bbeta)}{2}.
    $$
    Meanwhile, notice that $\hat\bx$ is a discrete distribution on $\abs*{\partition}$ elements, so we may use the max entropy property of the uniform distribution to write
    $$
    \log \abs*{\partition} \geq H(\hat{\bx}) \geq I(\bx;\hat\bx) \geq R(\delta(p;\bbeta);\measure,\bbeta).
    $$
    Plugging these formulas into \eqref{eq:simplified_bias_variance_lower_bound} gives the lower bound
    $$
    \sum_{\cell \in \partition} \Var\braces{f(\bx)~|~\bx \in \cell}\measure\braces{\cell} + \frac{\abs*{\partition}\sigma^2}{2n} \geq \frac{1}{2}\paren*{\delta(p;\bbeta) + \frac{\sigma^2 2^{R(\delta(p;\bbeta);\measure,\bbeta)}}{n}}.
    $$
    Minimizing over all partitions yields \eqref{eq:rate_distortion_lower_bound_for_linear}.
\end{proof}

We remark that the lemma applies to both continuous and discrete distribution, and may be valid for other subsets $\mathcal{X} \subset \R^d$. Next, it is known that rate distortion functions are convex and monotonically decreasing, and one may therefore check that the same applies to the function $D \mapsto 2^{R(D;\nu,\bbeta)}$. As such, the right-hand-side of \eqref{eq:rate_distortion_lower_bound_for_linear} is the solution to a convex optimization problem, and can be solved efficiently in principle. This is especially significant when $\mathcal{X}$ is the Boolean cube, because it allows us to turn what is a priori a combinatorial optimization problem into a smooth, convex one.

When $\bbeta$ has at least $s$ large coefficients, the independence of the covariates allows us to use standard calculations to bound $R(D;\nu,\bbeta)$ from below by elementary functions, therefore giving us a closed-form formula for the oracle expected risk. We state the result for the case where $\mathcal{X}$ is the unit length cube, and leave more general versions of the theorem to the next section. 

\begin{theorem}[Lower bound for linear models] \label{thm:linear_model_lower_bound}
    In addition to the assumptions of Lemma \ref{lem:rate_distortion_lower_bound}, assume that $\mathcal{X}$ is the unit length cube $[0,1]^\dim$, and that the covariates each follow some marginal distribution $\nu_0$. Suppose there is some subset of coordinates $S \subset \coordindices$ of size $s$ such that $\abs*{\beta_j} \geq \beta_0$ for all $j \in S$. Then the oracle expected risk is lower bounded by
    \begin{equation} \label{eq:linear_model_lower_bound}
        \oraclerisk(f,\nu,n) \geq s2^{\frac{2s}{s+2}h(\nu_0)-2}\paren*{\frac{\beta_0^2}{\pi e}}^{s/(s+2)}\paren*{\frac{\sigma^2}{n}}^{2/(s+2)}.
    \end{equation}
\end{theorem}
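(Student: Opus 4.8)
The plan is to feed an explicit lower bound on the rate-distortion function $R(D;\measure,\bbeta)$ into Lemma~\ref{lem:rate_distortion_lower_bound}, and then solve the resulting one-dimensional convex program in closed form. The lower bound on $R(D;\measure,\bbeta)$ proceeds in three standard moves: decompose across the independent coordinates, apply the Shannon lower bound coordinate-wise, and optimize the distortion budget across coordinates.

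Concretely, fix any conditional law $p_{\hat\bx\mid\bx}$ with $\delta(\measure\, p_{\hat\bx\mid\bx};\bbeta)=\sum_j\beta_j^2\, d_j\le D$, where $d_j\coloneqq \E\braces*{\paren*{x_j-\hat x_j}^2}$. Because $x_1,\dots,x_d$ are independent, the chain rule, subadditivity of conditional entropy, and conditioning-reduces-entropy give $I(\bx;\hat\bx)\ge\sum_j I(x_j;\hat x_j)$; dropping the coordinates outside $S$ and using, for $j\in S$, the maximum-entropy bound $h(x_j\mid\hat x_j)=h(x_j-\hat x_j\mid\hat x_j)\le h(x_j-\hat x_j)\le\tfrac12\log_2\paren*{2\pi e\, d_j}$, we obtain $I(\bx;\hat\bx)\ge\sum_{j\in S}\paren*{h(\nu_0)-\tfrac12\log_2\paren*{2\pi e\, d_j}}$. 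Since $\abs*{\beta_j}\ge\beta_0$ on $S$, the budget constraint forces $\sum_{j\in S}d_j\le D/\beta_0^2$, and minimizing the right-hand side over that simplex (the minimum is at $d_j=D/(s\beta_0^2)$ by AM--GM) yields
$$
R(D;\measure,\bbeta)\;\ge\; s\,h(\nu_0)-\tfrac{s}{2}\log_2\!\paren*{\frac{2\pi e D}{s\beta_0^2}},
\qquad\text{equivalently}\qquad
2^{R(D;\measure,\bbeta)}\;\ge\; 2^{s\,h(\nu_0)}\paren*{\frac{2\pi e D}{s\beta_0^2}}^{-s/2},
$$
the second inequality holding for all $D>0$ (trivially when its right side falls below $1$, since $R\ge0$).

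Plugging this into \eqref{eq:rate_distortion_lower_bound_for_linear} and writing $c\coloneqq\frac{\sigma^2}{n}2^{s\,h(\nu_0)}\paren*{\frac{s\beta_0^2}{2\pi e}}^{s/2}$ gives $\oraclerisk(f,\measure,n)\ge\tfrac12\inf_{D>0}\braces*{D+c\,D^{-s/2}}$. The map $D\mapsto D+c\,D^{-s/2}$ is convex on $(0,\infty)$, with minimizer $D^\star=(sc/2)^{2/(s+2)}$ and minimum value $\tfrac{s+2}{s}D^\star$. Substituting the value of $c$ and collecting powers via $(s/2)^{2/(s+2)}(s/2)^{s/(s+2)}=s/2$ and $\frac{s\beta_0^2}{2\pi e}=\frac{s}{2}\cdot\frac{\beta_0^2}{\pi e}$ yields $\oraclerisk(f,\measure,n)\ge\tfrac{s+2}{4}\,2^{\frac{2s}{s+2}h(\nu_0)}\paren*{\frac{\beta_0^2}{\pi e}}^{s/(s+2)}\paren*{\frac{\sigma^2}{n}}^{2/(s+2)}$, and the claimed bound follows from $\tfrac{s+2}{4}>\tfrac{s}{4}=s\cdot2^{-2}$.

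The main obstacle is making the coordinate-wise Shannon lower bound rigorous for a general marginal $\nu_0$: this requires $\nu_0$ to be absolutely continuous with finite $h(\nu_0)$ (implicit in the hypotheses), and some care with the inequality $h(x_j\mid\hat x_j)\le\tfrac12\log_2(2\pi e\, d_j)$, since $\hat x_j$ need not have a density — one checks that $x_j-\hat x_j$ is nonetheless absolutely continuous (a mixture of translates of the density of $\nu_0$), so $h(x_j-\hat x_j)\le\tfrac12\log_2\paren*{2\pi e\,\Var(x_j-\hat x_j)}\le\tfrac12\log_2(2\pi e\, d_j)$. The across-coordinate budget allocation must also be handled uniformly over $p_{\hat\bx\mid\bx}$, which is exactly what the minimization over the simplex above accomplishes. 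The remaining one-variable optimization and the bookkeeping of constants are routine.
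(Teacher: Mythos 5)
Your proof is correct and takes essentially the same route as the paper's: lower-bound $R(D;\nu,\bbeta)$ via the product decomposition across independent coordinates and the coordinate-wise Shannon lower bound (this is exactly the content of Lemmas \ref{lem:rate_for_product}, \ref{lem:rate_for_dominated_norm} and \ref{lem:rate_for_univariate}, which you re-derive inline), then minimize $D + cD^{-s/2}$ in the bound from Lemma \ref{lem:rate_distortion_lower_bound}. The only difference is cosmetic: you keep both terms at the optimum and obtain the slightly sharper constant $(s+2)/4$, whereas the paper drops the variance term and settles for $s/4$.
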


\begin{proof}
    Combining Lemmas \ref{lem:rate_for_product}, \ref{lem:rate_for_dominated_norm}, and \ref{lem:rate_for_univariate}, we know that $R(D;\nu,\bbeta)$ is lower bounded by the value of
    \begin{equation} \label{eq:rate_lower_bound_for_cts_linear_opt}
        \inf_{\beta_0^2 \sum_{j \in S} D_j \leq D} \sum_{j\in S} \paren*{ h(\nu_0) - \frac{1}{2}\log\paren*{2\pi e D_j}}\vee 0.
    \end{equation}
    
    This is a convex optimization problem, and by symmetry over the coordinate indices in $S$, it is easy to see that the infimum is achieved at $D_j = \frac{D}{s \beta_0^2}\vee \frac{2^{2h(\nu_0)}}{2\pi e}$ for $j \in S$, and $D_j = \frac{2^{2h(\nu_0)}}{2\pi e}$ for $j \notin S$, where $s = \abs{S}$. Plugging these into \eqref{eq:rate_lower_bound_for_cts_linear_opt}, we get
    \begin{equation} \label{eq:rate_distortion_final_bound_for_linear_cts}
        R(D;\nu,\bbeta) \geq s \paren*{h(\nu_0) - \frac{1}{2} \log\paren*{\frac{2\pi e D}{s\beta_0^2}}}.
    \end{equation}
    
    As such, we have
    $$
    D + \frac{\sigma^2 2^{R(D;\measure,\bbeta)}}{n} \geq D + \frac{\sigma^2 2^{h(\nu_0)}}{n} \paren*{\frac{s\beta_0^2}{2\pi e D}}^{s/2}.
    $$
    Differentiating, we easily see that the minimum is achieved at
    $$
    D = s2^{\frac{2s}{s+2}h(\nu_0)-1}\paren*{\frac{\beta_0^2}{\pi e}}^{s/(s+2)}\paren*{\frac{\sigma^2}{n}}^{2/(s+2)}.
    $$
    Using this value in \eqref{eq:rate_distortion_lower_bound_for_linear} and dropping the second term then completes the proof.
\end{proof}


\section{Results for additive models} \label{sec:additive_model_bounds}

In the previous section, we saw how the bias-variance risk decomposition and an information theoretic argument can be used to obtain a lower bound for oracle expected risk. With a more sophisticated application of the latter, we can derive more powerful results for additive models over both continuous and Boolean feature spaces. We state these results in this section, deferring all proofs to the appendix because of space constraints.

\begin{theorem}[Lower bound for additive models on unit length cube] \label{thm:lower_bound_additive_models}
    Assume the regression model \eqref{eq:regression_model}, with $f$ be defined as in \eqref{eq:additive_model}, and assume that the covariate space is the unit length cube $[0,1]^\dim$. Suppose $\phi_j \in C^1([0,1])$ for $j=1,\ldots,\dim$. Let $I_1, I_2,\ldots, I_d \subset [0,1]$ be sub-intervals, and suppose there is some subset of indices $S \subset \coordindices$ of size $s$ such that $\displaystyle \min_{t \in I_j} \abs*{\phi_{j}'(t)} \geq \beta_0 > 0$ for all $j \in S$. Denote $\mathcal{K} = \braces*{\bx \colon x_j \in I_j ~\text{for}~j=1,\ldots,d}$. Assume that $\nu$ is a continuous distribution with density $q$, and denote $q_{min} = \displaystyle\min_{\bx \in \mathcal{K}} q(\bx)$. Then the oracle expected risk is lower bounded by
    \begin{equation} \label{eq:additive_model_lower_bound_specialized}
        \oraclerisk(f,\measure,n) \geq s\mu(\mathcal K) \paren*{\frac{\beta_0^2q_{min}}{12}}^{s/(s+2)} \paren*{\frac{\sigma^2}{4n}}^{2/(s+2)}.
    \end{equation}
\end{theorem}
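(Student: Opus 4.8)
The plan is to feed the bias--variance lower bound of Theorem~\ref{thm:bias_variance_permissible} into a rate--distortion style argument, in the spirit of Lemma~\ref{lem:rate_distortion_lower_bound} and Theorem~\ref{thm:linear_model_lower_bound}, but carried out \emph{inside} the rectangle $\mathcal K$. Fix any permissible partition $\partition$; by Theorem~\ref{thm:bias_variance_permissible} it suffices to lower bound $\sum_{\cell\in\partition}\Var\{f(\bx)\mid\bx\in\cell\}\,\measure\{\cell\}+\frac{|\partition|\sigma^2}{2n}$ uniformly over $\partition$, so everything reduces to a trade-off between the bias term and the leaf count $|\partition|$. \emph{Step 1 (localize and linearize).} First I would discard every cell missing $\mathcal K$ and replace each remaining $\cell$ by $\cell\cap\mathcal K$, obtaining a rectangular partition $\partition'$ of $\mathcal K$ with $|\partition'|\le|\partition|$. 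Writing $\Var\{f\mid\cell\}\measure\{\cell\}=\min_{c}\int_\cell(f-c)^2\,d\measure$, restricting the domain to $\cell\cap\mathcal K$ and using $q\ge q_{min}$ there gives
$$
\Var\{f\mid\cell\}\,\measure\{\cell\}\ \ge\ q_{min}\,\mathrm{vol}(\cell\cap\mathcal K)\,\Var_{\mathrm{unif}}\{f\mid\cell\cap\mathcal K\}.
$$
On a subrectangle $\cell'=\prod_j[a_j,b_j]\subseteq\mathcal K$ the uniform law makes the coordinates independent, so $\Var_{\mathrm{unif}}\{f\mid\cell'\}=\sum_j\Var_{\mathrm{unif}[a_j,b_j]}\{\phi_j\}\ge\sum_{j\in S}\Var_{\mathrm{unif}[a_j,b_j]}\{\phi_j\}$, and writing each such variance as $\tfrac12\E[(\phi_j(X)-\phi_j(X'))^2]$ for i.i.d.\ $X,X'\sim\mathrm{unif}[a_j,b_j]$ and applying the mean value theorem with $\min_{t\in I_j}|\phi_j'(t)|\ge\beta_0$ (any intermediate point lies in $[a_j,b_j]\subseteq I_j$) gives $\Var_{\mathrm{unif}[a_j,b_j]}\{\phi_j\}\ge\beta_0^2(b_j-a_j)^2/12$. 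Summing, the bias term is at least $\frac{q_{min}\beta_0^2}{12}\,\Phi(\partition')$, where $\Phi(\partition'):=\sum_{\cell'\in\partition'}\mathrm{vol}(\cell')\sum_{j\in S}\ell_j(\cell')^2$ with $\ell_j(\cell')$ the side lengths; in effect $f$ on $(\mathcal K,\measure)$ has been replaced by the linear surrogate $\bx\mapsto\beta_0\sum_{j\in S}x_j$ under Lebesgue measure.

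\emph{Step 2 (leaves versus distortion).} Next I would lower bound $\Phi(\partition')$ purely in terms of $|\partition'|$. AM--GM inside a cell gives $\sum_{j\in S}\ell_j(\cell')^2\ge s\big(\prod_{j\in S}\ell_j(\cell')\big)^{2/s}$; since $\ell_j(\cell')\le|I_j|\le1$, the non-$S$ side lengths multiply to at most $1$, so $\mathrm{vol}(\cell')\le\prod_{j\in S}\ell_j(\cell')$, and hence $\mathrm{vol}(\cell')\sum_{j\in S}\ell_j(\cell')^2\ge s\,\mathrm{vol}(\cell')^{1+2/s}$. Applying Jensen's inequality to the convex function $t\mapsto t^{1+2/s}$ over the $|\partition'|$ cells, whose volumes sum to $\mu(\mathcal K)$, then yields $\Phi(\partition')\ge s\,\mu(\mathcal K)^{1+2/s}|\partition'|^{-2/s}$, and with $|\partition'|\le|\partition|$ we get
$$
\sum_{\cell\in\partition}\Var\{f\mid\cell\}\measure\{\cell\}\ \ge\ \frac{s\,q_{min}\beta_0^2}{12}\,\mu(\mathcal K)^{1+2/s}\,|\partition|^{-2/s}.
$$
This is exactly the rate--distortion inequality $\log|\partition|\ge H(\hat\bx)\ge R(\,\cdot\,)$ applied to the cell-centroid quantizer $\hat\bx$ of a uniform source; I expect the constant $12$ --- instead of Shannon's $2\pi e$ --- to arise precisely because one is restricted to \emph{rectangular} reconstruction cells, i.e.\ from $\Var_{\mathrm{unif}[a,b]}=(b-a)^2/12$ together with AM--GM.

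\emph{Step 3 (optimize the trade-off).} Feeding Steps 1--2 into Theorem~\ref{thm:bias_variance_permissible} and relaxing the integer $|\partition|$ to a positive real $m$,
$$
\oraclerisk(f,\measure,n)\ \ge\ \inf_{m>0}\Big\{\tfrac{s\,q_{min}\beta_0^2}{12}\,\mu(\mathcal K)^{1+2/s}\,m^{-2/s}\ +\ \tfrac{m\sigma^2}{2n}\Big\}.
$$
This one-variable convex problem is solved by elementary calculus; at the optimum the two terms are in ratio $s:2$, the identity $(1+2/s)\cdot\frac{s}{s+2}=1$ collapses the power of $\mu(\mathcal K)$ back to $1$, and one obtains $\oraclerisk(f,\measure,n)\ge(s+2)\,\mu(\mathcal K)\,(\beta_0^2q_{min}/12)^{s/(s+2)}(\sigma^2/4n)^{2/(s+2)}$, which is slightly stronger than \eqref{eq:additive_model_lower_bound_specialized}.

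The step I expect to be the main obstacle is Step 2: the bias-versus-leaves inequality must hold for \emph{every} rectangular partition of $\mathcal K$, including highly irregular non-grid ones, and it is this step that is responsible for the exponent $2/(s+2)$. Making the AM--GM/Jensen chain (equivalently, the rectangular-quantizer rate bound) lossless enough to match the claimed constants, and handling the bookkeeping around cells straddling $\partial\mathcal K$, cells of zero volume, and the fact that $\partition'$ need be neither permissible nor the restriction of a partition of the whole cube (all harmless, since we only ever lower bound), is where the care lies. The mean value theorem estimate of Step 1 and the optimization of Step 3 are routine by comparison.
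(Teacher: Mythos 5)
Your proposal is correct, and for the key counting step it takes a genuinely different route from the paper. Both arguments start identically: the bias--variance decomposition of Theorem~\ref{thm:bias_variance_permissible}, restriction of each cell to $\mathcal K$ and a change of measure to Lebesgue with the factor $q_{min}$ (the paper's Lemmas~\ref{lem:change_of_measure} and~\ref{lem:restriction_to_sub_rectangle}), and the mean-value-theorem bound $\Var_{\mathrm{unif}}\{\phi_j\mid[a_j,b_j]\}\ge\beta_0^2(b_j-a_j)^2/12$ (the paper's Lemma~\ref{lem:variance_and_sides}). Where you diverge is in converting the resulting weighted sum of squared side lengths into a bound on $\abs{\partition}$: the paper applies Markov's inequality to extract a subcollection of cells each with conditional variance at most twice the average, invokes a per-cell volume upper bound obtained by solving a convex program via KKT (Lemma~\ref{lem:variance_and_volume}), and then counts cells volumetrically; you instead apply AM--GM within each cell and Jensen's inequality across cells to bound $\sum_{\cell'}\mathrm{vol}(\cell')\sum_{j\in S}\ell_j(\cell')^2$ directly by $s\,\mu(\mathcal K)^{1+2/s}\abs{\partition}^{-2/s}$, with no Markov step and no Lagrangian analysis. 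Your chain checks out (including the subtleties you flag: zero-volume intersections, $\abs{\partition'}\le\abs{\partition}$ with a bound decreasing in the leaf count, and the intermediate point of the MVT lying in $I_j$), and the final optimization indeed yields the constant $(s+2)$ in place of the paper's $s$, so you obtain a marginally stronger bound. What your route gives up is generality: the equal-weight AM--GM leans on $\beta_j\ge\beta_0$ uniformly over $S$, whereas the paper's KKT-based volume lemma is what powers the more general Theorem~\ref{thm:lower_bound_additive_models_general} with decaying coefficients $\beta_j$ and the function $g_{\bbeta}$; for the specific statement at hand, your argument is the cleaner of the two.
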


As mentioned before, the $\Omega(n^{-2/(s+2)})$ rate in \eqref{eq:additive_model_lower_bound_specialized} is the $\ell_2$ minimax rate for nonparametric estimation of $C^1$ functions in $s$ dimensions \citep{stone1982optimal}. This is far worse than the minimax rate for estimating sparse additive models, which scales as $\max\braces*{\frac{s\log(d/s)}{n},s\epsilon_n^2(\mathcal{H})}$, where $\epsilon_n(\mathcal{H})$ is a quantity that depends only on $\mathcal{H}$ and the sample size $n$ \citep{raskutti2012minimax}.

The theorem is more flexible than Theorem \ref{thm:linear_model_lower_bound} in the following ways: It allows the component functions $\phi_j$ to be nonlinear, and even have vanishing derivatives everywhere except on an interval, which means \eqref{eq:additive_model_lower_bound_specialized} applies to any nontrivial choice of the $\phi_j$'s. Furthermore, unlike Theorem \ref{thm:linear_model_lower_bound}, it does not require the covariates to be independent. Finally, a more general version of the lower bound, stated as Theorem \ref{thm:lower_bound_additive_models_general} in Appendix \ref{sec:covering_proof}, allows us to provide tighter lower bounds in the case where the $\beta_j$'s may be decaying in magnitude rather than having a non-zero lower bound.

These improvements require a different information theoretic argument, which roughly works as follows: First, we derive the maximum volume a cell can have under gradient lower bounds and a prescribed variance constraint (see Lemma \ref{lem:variance_and_volume}.) We then use this to compute the number of cells necessary to cover the portion of $[0,1]^\dim$ over which the the gradient lower bounds hold. As an easy by-product of the above calculations, we also compute the optimal dimensions of a cell under the variance constraint. This allows us to derive matching oracle upper bounds for sparse additive models: 

\begin{proposition}[Upper bound for sparse additive models on unit length cube] \label{prop:sparse_additive_upper_bound}
    Let $f$ be a sparse additive model, i.e. there is a subset of coordinates $S \subset \coordindices$ such that $f(\bx) = \sum_{j \in S} \phi_j(x_j)$. Assume that the covariate space is the unit length cube $[0,1]^\dim$. Suppose $\phi_j \in C^1([0,1])$, and $\norm{\phi_j}_\infty \leq \beta_{max}$ for $j \in S$. Assume $\nu$ is a continuous distribution with density $q$. Then
    \begin{equation}
        \limsup_{n \to \infty}\frac{\oraclerisk(f,\measure,n)}{n^{-2/(s+2)}} \leq \paren*{168s\norm{q}_\infty\beta_{max}^2 + 6}\sigma^{2s/(s+2)}.
    \end{equation}
\end{proposition}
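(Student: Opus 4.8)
The plan is to exhibit, for each large $n$, one explicit permissible partition $\partition=\partition_n$ and bound its expected risk from above using \eqref{eq:simplified_bias_variance_upper_bound}; since $\oraclerisk(f,\measure,n)$ is the infimum of such expected risks over permissible partitions, this bounds $\oraclerisk$. Because $f$ depends only on the coordinates in $S$, splitting any coordinate outside $S$ can only increase $\abs*{\partition}$ without lowering the bias, so $\partition$ should be the product of a partition of the $S$-cube $[0,1]^s$ with the trivial one-cell partition in the remaining coordinates. The natural candidate is the regular grid of side length $h=h_n$ in the $S$-coordinates, which has $\abs*{\partition}=h^{-s}$ cells; $h_n$ will be taken proportional to $n^{-1/(s+2)}$, with the constant (which may depend on $\sigma,s,\beta_{max},\norm*{q}_\infty$) fixed at the end to balance the two main terms. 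Since each $\phi_j\in C^1([0,1])$ is bounded, we may also assume, after absorbing additive constants, that $\norm*{f}_\infty\le s\beta_{max}$; this is used only to control the residual term $E(\partition)$.

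For the bias term $\sum_{\cell}\Var\braces*{f(\bx)\mid\bx\in\cell}\measure\braces*{\cell}$, the key step is to observe that it equals the squared $L^2(\measure)$ distance from $f$ to the functions constant on each cell, and is therefore at most $\norm*{q}_\infty$ times the squared $L^2(\mathrm{Leb})$ distance, i.e. $\norm*{q}_\infty\sum_{\cell}\mathrm{vol}(\cell)\,\Var_{\mathrm{Unif}(\cell)}(f)$. Here additivity is essential: on a grid cell, a product of length-$h$ intervals, $\Var_{\mathrm{Unif}(\cell)}(f)=\sum_{j\in S}\Var_{\mathrm{Unif}}(\phi_j)$, and the $C^1$ bound $\abs*{\phi_j'}\le\beta_{max}$ together with the variance of a uniform law on an interval gives $\Var_{\mathrm{Unif}}(\phi_j)\lesssim\beta_{max}^2h^2$. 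Since $\sum_{\cell}\mathrm{vol}(\cell)=1$, the bias term is $\lesssim\norm*{q}_\infty s\beta_{max}^2h^2$. This is exactly where the ``optimal cell dimensions'' computation referred to just above the statement enters: it identifies $h$ as the largest cell side compatible with a prescribed per-cell variance budget, which is what pins down $h_n$.

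The variance term is $6\abs*{\partition}\sigma^2/n=6\sigma^2h^{-s}/n$, and $E(\partition)$ is handled through $\E\braces*{f\mid\cell}^2\le\norm*{f}_\infty^2$ and $(1-\measure\braces*{\cell})^n\le e^{-n\measure\braces*{\cell}}$, so that only cells close to the permissibility threshold $\measure\braces*{\cell}=1/n$ matter and these carry little mass. Combining, $\E\risk(\hat f(-;\partition,\data))\lesssim\norm*{q}_\infty s\beta_{max}^2h_n^2+\sigma^2h_n^{-s}/n$; choosing $h_n$ to balance the two terms, dividing by $n^{-2/(s+2)}$, sending $n\to\infty$, and carrying along the explicit constants $7$ and $6$ of \eqref{eq:simplified_bias_variance_upper_bound} (and the constant in the per-cell variance bound) produces the stated coefficient $168\,s\norm*{q}_\infty\beta_{max}^2+6$.

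The main obstacle is \emph{permissibility}: the regular grid need not be permissible, because wherever $q$ is small a grid cell can have $\measure\braces*{\cell}<1/n$. Any such cell has average density $o(1)$, so the offending cells jointly carry $\measure$-mass only $O(h_n^{-s}/n)=O(n^{-2/(s+2)})$. They must be merged into neighbouring cells in a way that (i) keeps every cell a rectangle, (ii) keeps $\abs*{\partition}=O(h_n^{-s})$, and (iii) keeps $\sum_{\cell}\Var\braces*{f\mid\cell}\measure\braces*{\cell}$ over the merged cells $O(n^{-2/(s+2)})$ --- the merged cells can be large in diameter but carry little mass, which is what makes (iii) hold. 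Making this merging precise (for instance via a measure-adapted dyadic coarsening of the low-density region) and checking that $E(\partition)$ survives it is the delicate part; everything else is bookkeeping.
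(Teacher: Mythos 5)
Your construction is essentially the paper's own: a regular grid in the $S$-coordinates (trivial elsewhere) with side length $\asymp n^{-1/(s+2)}$, the bias controlled by pulling out $\norm{q}_\infty$ and using the additive/independence structure to bound the per-cell uniform variance by $O(s\beta_{max}^2h^2)$, the variance term $6\abs*{\partition}\sigma^2/n$ from \eqref{eq:simplified_bias_variance_upper_bound}, and the same balancing of the two terms. The permissibility issue you single out as the main obstacle is a fair concern, but the paper's proof simply tessellates and never verifies $\nu\braces{\cell}\geq 1/n$ for the grid cells, so your proposed measure-adapted merging is added care beyond (not a departure from) the published argument.
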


Our next main result is for additive models over the Boolean cube. Note that all additive models are linear in this setting, and we are thus able to prove this using the original rate-distortion argument.

\begin{theorem}[Lower bounds for additive Boolean models]
\label{thm:lower_bound_boolean_additive_model}
    Assume the regression model \eqref{eq:regression_model} and that the conditional mean function is linear: $f(\bx) = \bbeta^{T}\bx$. Assume that the covariate space is the hypercube $\hypercube$, and that the covariates are independent, with $x_j \sim \text{Ber}(\pi)$, $0 \leq \pi \leq \frac{1}{2}$, for $j=1,\ldots,d$. Suppose there is some subset of coordinates $S \subset \coordindices$ of size $s$ such that $\abs*{\beta_j} \geq \beta_0 > 0$ for all $j \in S$. Then the oracle expected risk is lower bounded by
    \begin{equation} \label{eq:risk_boolean_lower_bound_specialized}
        \oraclerisk(f,\nu,n) \geq \frac{s\beta^2_0}{2}\paren*{1 - \paren*{\frac{2e^sn\beta^2_0}{2^{sH(\pi)}\sigma^2}}^{\frac{1}{s-1}}}.
    \end{equation}
\end{theorem}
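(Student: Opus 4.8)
The plan is to run the ``original'' rate--distortion argument, namely Lemma~\ref{lem:rate_distortion_lower_bound}, which applies verbatim because on $\hypercube$ the additive model $f(\bx)=\bbeta^{T}\bx$ is linear. This reduces the theorem to two tasks: (i) a clean closed-form lower bound for the rate--distortion function $R(D;\nu,\bbeta)$ of the i.i.d.\ Bernoulli source, and (ii) minimizing $D+\sigma^2 2^{R(D;\nu,\bbeta)}/n$ over $D>0$.

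For task (i) I would first collapse the rate to a single coordinate. Independence of the covariates makes the rate additive over coordinates (the mechanism behind Lemma~\ref{lem:rate_for_product}), and replacing each weight $\beta_j$ by the smaller $\beta_0$ for $j\in S$ and by $0$ for $j\notin S$ only decreases the rate (Lemma~\ref{lem:rate_for_dominated_norm}); since the resulting scalar rate--distortion function $R_1$ --- that of one $\text{Ber}(\pi)$ source under the loss $\beta_0^2(x-\hat x)^2$ --- is convex and nonincreasing, the cheapest way to spend a distortion budget $D$ is to split it evenly across the $s$ retained coordinates, giving $R(D;\nu,\bbeta)\ge s\,R_1(D/s)$. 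The crux is bounding $R_1$ from below. For any test channel, $I(x;\hat x)=H(\pi)-H(x\mid\hat x)$; conditioning on $\hat x=c$ and writing $p:=\P(x=1\mid\hat x=c)$, the conditional loss equals $\beta_0^2\paren*{p(1-p)+(p-c)^2}\ge\beta_0^2\,p(1-p)$, while the conditional entropy of $x$ is the binary entropy $H(p)$. The key structural fact is that the map $v\mapsto H\paren*{\tfrac{1-\sqrt{1-4v}}{2}}$ --- binary entropy expressed through the variance $v=p(1-p)$ --- is concave on $[0,\tfrac14]$, so Jensen's inequality gives $H(x\mid\hat x)\le H\paren*{\tfrac{1-\sqrt{1-4\delta/\beta_0^2}}{2}}$ whenever the distortion is at most $\delta$. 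Bounding the right side crudely by $H(p)\le\log_2 e$ (one nat) produces the usable estimate
\[
R_1(\delta)\ \ge\ H(\pi)-\log_2\frac{e}{1-\delta/\beta_0^2},\qquad\text{hence}\qquad R(D;\nu,\bbeta)\ \ge\ s\paren*{H(\pi)-\log_2\frac{e\,s\beta_0^2}{s\beta_0^2-D}}
\]
for $0<D<s\beta_0^2$.

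For task (ii) I would substitute this into \eqref{eq:rate_distortion_lower_bound_for_linear}. With the change of variable $t=s\beta_0^2-D$, the bracketed quantity becomes $s\beta_0^2-t+Bt^s$ where $B=\sigma^2 2^{sH(\pi)}/\paren*{n(es\beta_0^2)^s}$ (the leftover range $D\ge s\beta_0^2$ only gives the harmless $D+\sigma^2 2^{R(D;\nu,\bbeta)}/n\ge s\beta_0^2$); its minimizer is $t^\star=(sB)^{-1/(s-1)}$ with minimum value $s\beta_0^2-\tfrac{s-1}{s}(sB)^{-1/(s-1)}$, and $t^\star$ is interior whenever $\rho:=2e^s n\beta_0^2/\paren*{2^{sH(\pi)}\sigma^2}<1$ --- which is the only regime in which the claimed bound is positive. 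Replacing $\tfrac{s-1}{s}$ by $1$ and invoking the identity $(s\beta_0^2)^{s-1}\rho=2(sB)^{-1}$, which makes $(sB)^{-1/(s-1)}\le s\beta_0^2\rho^{1/(s-1)}$, bounds this minimum below by $s\beta_0^2\paren*{1-\rho^{1/(s-1)}}$; the factor $\tfrac12$ in \eqref{eq:rate_distortion_lower_bound_for_linear} then yields exactly \eqref{eq:risk_boolean_lower_bound_specialized}.

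The step I expect to be the main obstacle is the concavity of binary entropy as a function of the Bernoulli variance: it is precisely what produces the sharp leading constant $s/2$ --- a Fano- or Pinsker-type surrogate for the scalar bound would lose a multiplicative constant there --- and verifying it cleanly reduces to checking that $v\mapsto\tfrac{1}{\sqrt{1-4v}}\log_2\tfrac{1-p}{p}$ (with $p=\tfrac{1-\sqrt{1-4v}}{2}$) is monotone in $v$. A minor nuisance is the regime bookkeeping --- tracking for which $D$, $\delta$, $n$ the terms $\log_2\tfrac{e}{1-\delta/\beta_0^2}$, the even split, and the interior minimizer $t^\star$ are active --- but each degenerate case collapses to $\oraclerisk\ge 0$ together with $\rho\ge 1$.
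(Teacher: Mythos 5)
Your proposal is correct and follows essentially the same route as the paper's proof (Theorem \ref{thm:lower_bound_boolean_additive_model_general}): reduce via Lemma \ref{lem:rate_distortion_lower_bound}, tensorize the rate over coordinates and pass to the flattened weights $\beta_0$ (Lemmas \ref{lem:rate_for_product} and \ref{lem:rate_for_dominated_norm}), split the distortion budget evenly to get $R(D)\ge s\bigl(H(\pi)-H(D/(s\beta_0^2))\bigr)$, relax this to the same loose bound $2^{R(D)}\gtrsim (2^{H(\pi)}/e)^s(1-D/(s\beta_0^2))^s$ as \eqref{eq:loose_rate_bound_boolean}, and optimize in $D$ exactly as the paper does. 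The only micro-level divergence is your derivation of the univariate step via the conditional-variance decomposition and concavity of entropy as a function of Bernoulli variance; note that the concavity argument you flag as the main obstacle is actually superfluous, since your final estimate $H(x\mid\hat x)\le\log_2\frac{e}{1-\delta/\beta_0^2}$ already follows from the trivial bound $H(p)\le 1\le\log_2 e$, which is in effect what the paper's elementary Lemma \ref{lem:small_lemma_for_bool} delivers.
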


The form of the lower bound \eqref{eq:risk_boolean_lower_bound_specialized} is different from that in Theorem \ref{thm:lower_bound_additive_models}. This is due to the fact that we can achieve zero approximation error over the Boolean cube with finitely many cells. As a consequence, while the rate-distortion function in the continuous case \eqref{eq:rate_distortion_final_bound_for_linear_cts} tends to infinity as $D$ tends to 0, that in the Boolean case \eqref{eq:rate_distortion_final_bound_for_linear_boolean} tends to a finite number. Our proof of \eqref{eq:risk_boolean_lower_bound_specialized} does not actually use the sharp rate bound \eqref{eq:rate_distortion_final_bound_for_linear_boolean}, and instead approximates it by a more computationally tractable bound \eqref{eq:loose_rate_bound_boolean}. It is unclear how much slack from this approximation propagates into the final bound \eqref{eq:risk_boolean_lower_bound_specialized}, but it is reassuring that the general concave down shape of the test error scaling in Figure \ref{fig:linear_boolean} is consistent with \eqref{eq:risk_boolean_lower_bound_specialized}. We provide a more general version of the lower bound (Theorem \ref{thm:lower_bound_boolean_additive_model_general}) in Appendix \ref{sec:rate_distortion_proofs}.

There are a few interesting additional observations that can be made. First, we remark that although Theorem \ref{thm:lower_bound_additive_models} is stated in terms of distributions over the unit cube, we can easily extend it to non-compact distributions over $\R^\dim$ such as multivariate Gaussians by using marginal quantile transforms.\footnote{To see this, let $F_j$ denote the CDF for the marginal distribution for $x_j$. Then writing $\tilde x_j \coloneqq F_j(x_j)$ for $j=1,\ldots,d$, the random vector $\tilde \bx = (\tilde x_1,\ldots,\tilde x_\dim)$ takes values in $[0,1]^\dim$, and we have $f(\bx) = \sum_{j=1}^\dim \phi_j\paren*{F_j^{-1}(\tilde x_j)}$, which, as an additive model with respect to $\tilde \bx$, now satisfies the hypotheses of the theorem.} Second, from the formulas \eqref{eq:linear_model_lower_bound} and \eqref{eq:risk_boolean_lower_bound_specialized}, we see that the lower bound decreases with the entropy of the covariate distribution, although the rate in the sample size $n$ remains the same. Third, in Appendix \ref{sec:rate_distortion_proofs}, we provide more complicated lower bounds \eqref{eq:general_additive_model_lower_bound} and \eqref{eq:risk_boolean_lower_bound}, which, for a fixed value of $\norm{\bbeta}_2^2$, are smaller when more of the $\ell_2$ energy is concentrated in fewer coordinates, i.e. when the coefficients experience faster decay. This agrees with our intuition that decision trees are adaptive to low-dimensional structure beyond the hard sparsity regime, which has been the focus of recent literature \citep{syrgkanis2020estimation,klusowski2020,klusowski2021universal}. 

\section{Numerical simulations}

We examined the empirical validity of each of our main results by simulating the generalization error of tree-based algorithms fitted to sparse linear models with both continuous and Boolean features, as well as an additive non-linear sum of squares model with continuous features. Details of our experimental design and algorithm settings are given in the following paragraphs. 

\noindent\textbf{Experimental design:} For Figures \ref{fig:linear_cts} and \ref{fig:linear_boolean}, we simulate data from a sparse linear generative model $y = \bbeta^T\bx + \epsilon$ with $\bx \sim \text{Unif}\paren*{[0,1]^\dim}$ and $\bx \sim \text{Unif}\paren*{\{0,1\}^\dim}$ respectively. In Figure \ref{fig:sum_of_squares_cts}, we simulate data via a sparse sum of squares model $y = \sum_j \beta_j x^2_j + \epsilon$  with $\bx \sim \text{Unif}\paren*{[0,1]^\dim}$. In all of the experiments, we varied $n$, but fixed $d = 50$, $\sigma^2 = 0.01$, and set $\beta_j = 1$ for $j=1,\ldots,s$, and $\beta_j=0$ otherwise, where $s$ is a sparsity parameter. We ran the experiments with both $s=10$ and $s=20$, and plotted the results for each setting in panel A and panel B respectively for all of the figures. We computed the generalization error using a test set of size 500, averaging the results over 25 runs.

\noindent\textbf{Algorithm settings:} In all of our experiments, we fit both honest and non-honest versions of CART, as well as the non-honest version of RF using a training set of size $n$. For the honest version of CART, we use a separate independent sample of size $n$ to compute averages over each leaf in the tree. Furthermore, if a cell contains no samples from the training data used to do averaging, we search for the closest ancestor node that contains at least one sample and use the average over that node to make a prediction. We use \texttt{min\_samples\_leaf=5} as the stopping condition, although we also ran experiments with cost-complexity pruning and achieved very similar results.

We note that Figures   \ref{fig:linear_cts} and \ref{fig:sum_of_squares_cts}  not only give an empirical validation of the theoretical rates in our lower bound, which are $0.17$ and $0.09$ for $s=10 \text{ and } 20$ respectively, but also indicate that honest CART almost achieves these bounds despite there being no a priori guarantee that CART grows an optimal tree. While the theory does not cover the case of non-honest CART, its test error is worse than that of honest CART, and has a similar rate. Furthermore in Figure \ref{fig:linear_boolean}, we see that the general concave down shape of the test error scaling is consistent with the theoretical bound \eqref{eq:risk_boolean_lower_bound_specialized}. An interesting facet of all our simulations is that RF has a markedly faster rate, implying that diverse trees allow the algorithm to pool information across the training samples more efficiently, supporting \cite{breiman2001random}'s original hypothesis.

\begin{figure}[H]
    \centering
    \includegraphics[width=0.9\textwidth]{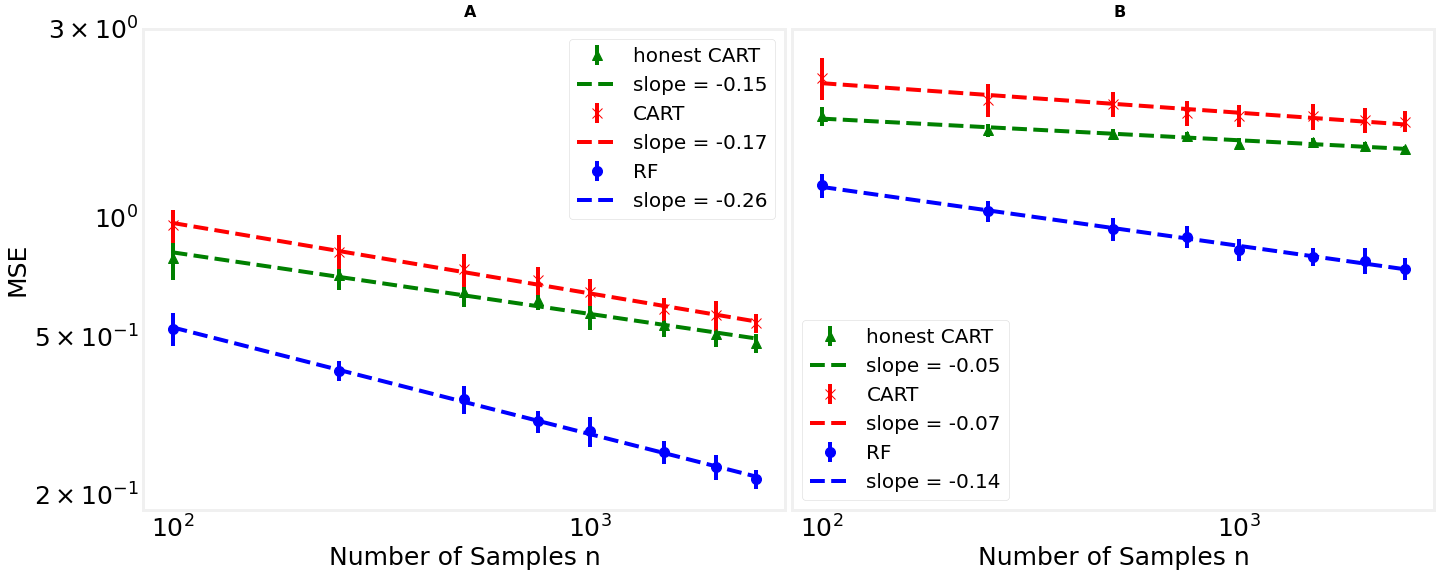}
    \caption{Scaling of the test set error for CART and RF for a sparse linear generative model $y = \bbeta^T\bx + \epsilon$ with $\bx \sim \text{Unif}\paren*{[0,1]^\dim}$.We show the scaling with respect to $n$ for \textbf{(A)} $s=10$, and \textbf{(B)} $s=20$.}
    \label{fig:linear_cts}
\end{figure}

\begin{figure}[H]
    \centering
    \includegraphics[width=0.9\textwidth]{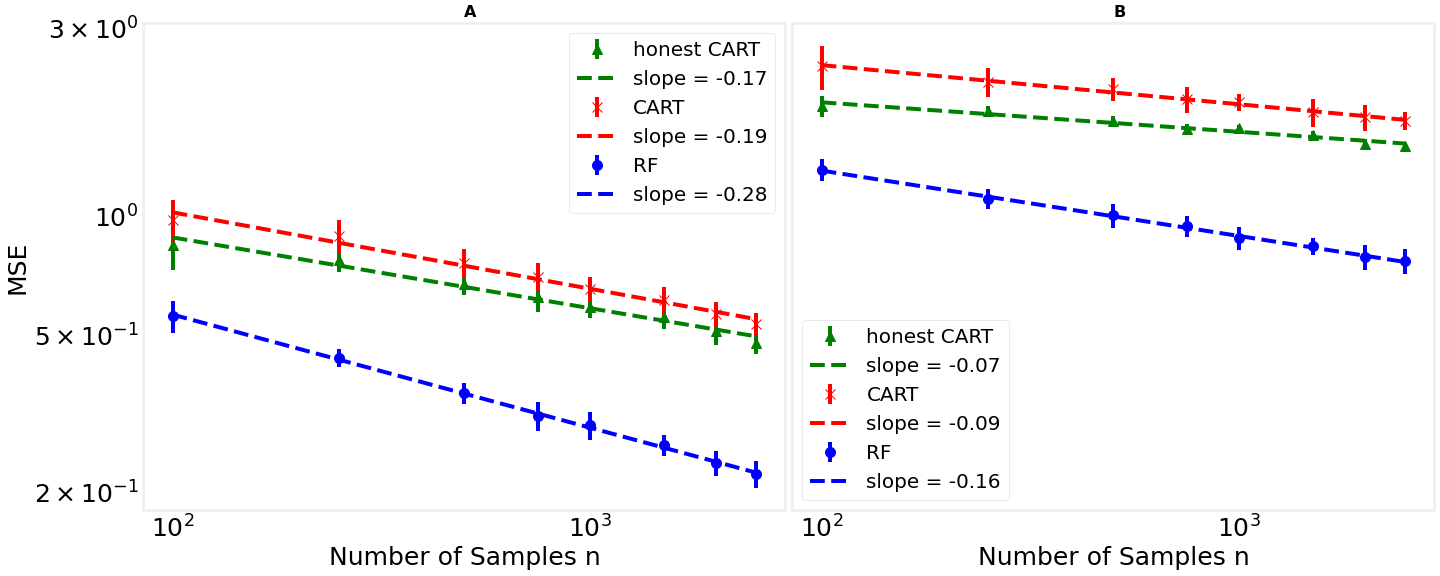}
    \caption{Scaling of the test set error for CART and RF for a sparse sum of squares generative model $y = \sum_j \beta_j x^2_j + \epsilon$ with $\bx \sim \text{Unif}\paren*{[0,1]^\dim}$.We show the scaling with respect to $n$ for \textbf{(A)} $s=10$, and \textbf{(B)} $s=20$.}
    \label{fig:sum_of_squares_cts}
\end{figure}

\begin{figure}[h]
    \centering
    \includegraphics[width=0.9\textwidth]{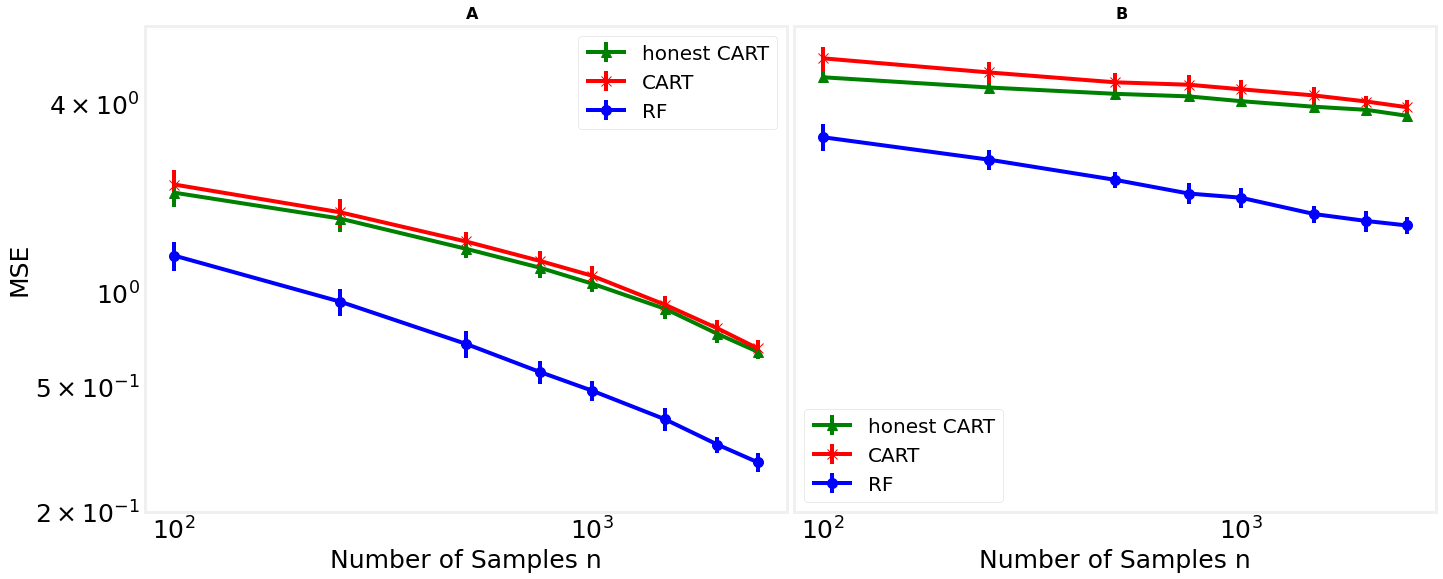}
    \caption{Scaling of the test set error for CART and RF for a sparse linear generative model $y = \bbeta^T\bx + \epsilon$ with $\bx \sim \{0,1\}^\dim$ and each $x_{j} \sim \text{Ber}(\frac{1}{2})$.We show the scaling with respect to $n$ for \textbf{(A)} $s=10$, and \textbf{(B)} $s=20$.}
    \label{fig:linear_boolean}
\end{figure}

\section{Discussion}

In this paper, we have obtained theoretical lower bounds on the expected risk for honest ALA trees when fitted to additive models, while our simulations suggest that these results should also hold for their non-honest counterparts. These bounds lead us to argue that such estimators, including CART, have an inductive bias against global structure, a bias that arises not from the greedy splitting criterion used by most decision tree algorithms, but from the leaf-only averaging property of this class of estimators. Furthermore, we provide experimental evidence that the bounds do not apply to RF, which supports \cite{breiman2001random}'s original narrative that the diversity of trees in a forest helps to reduce variance and improve prediction performance. Nonetheless, the rates exhibited by RF are still significantly slower than the minimax rates for sparse additive models, hinting at fundamental limits we are yet to understand.

Our results further the conversation about how decision tree algorithms can be improved, and suggest that they should be modified to more easily learn global structure. One natural idea on how to do this is to adopt some type of hierarchical shrinkage or global pooling. Another is to combine tree-based methods with linear or additive methods in a way that incorporates the statistical advantages of both classes of methods, in the vein of \cite{friedman2008predictive}'s RuleFit.\footnote{Given the interest of practitioners in using tree-based methods to identify interactions in genomics \citep{chen2012random,boulesteix2012overview,basu2018iterative,behr2021provable}, it is fair to say that this is a key strength of trees and RF.} Recently, \cite{bloniarz2016supervised} and \cite{friedberg2020local} suggested using the RF kernel in conjunction with local linear (or polynomial) regression, while \cite{kunzel2019linear} replaced the constant prediction over leaf with a linear model. These works, however, aim at modifying RF to better exploit smoothness, and do not directly address the loss in power for detecting global structure that comes from partitioning the covariate space. Furthermore, the focus on forests forestalls the possibility of preserving interpretability.


Taking a step back to look at the bigger picture, we believe that it is important to analyze the generalization performance of CART and other decision tree algorithms on other generative regression models in order to further elicit their inductive biases. The same type of analyses can also be applied to other machine learning algorithms. Since real world data sets often present some structure that can be exploited using the right inductive bias, this research agenda will allow us to better identify which algorithm to use in a given application, especially in settings, such as the estimation of heterogeneous treatment effects, where a held out test set is not available. Moreover, as seen in this paper, such investigations can yield inspiration for improving existing algorithms.

The approach we follow is different from the classical paradigm of statistical estimation, which starts with an estimation problem, and then searches for estimation procedures that can achieve some form of optimality. Instead, as is common in machine learning, we take an algorithm as the primitive object of investigation, and seek to analyze its performance under different generative models in order to elicit its inductive bias. This approach is more aligned with modern data analysis, in which we seldom have a good grasp over the functional form of the data generating process, leading to a handful of general purpose algorithms being used for the vast majority of prediction problems. This embrace of suboptimality is consistent with viewing the models as approximations -- an old tradition in the statistical literature (\cite{huber1967behavior,box1979robustness,grenander1981abstract,geman1982nonparametric,buja2019models}).

We have only scratched the surface of investigating the inductive biases of decision trees, RF, gradient boosting, and other tree-based methods, and envision an abundant garden for future work.


\subsection*{Acknowledgements}

This research is kindly supported in part by NSF TRIPODS Grant 1740855, DMS-1613002, 1953191, 2015341, IIS 1741340,  the Center for Science of Information (CSoI), an NSF Science and Technology Center, under grant agreement CCF-0939370, NSF grant 2023505 on Collaborative Research: Foundations of Data Science Institute (FODSI), the NSF and the Simons Foundation for the Collaboration on the Theoretical Foundations of Deep Learning through awards DMS-2031883 and 814639, and a Chan Zuckerberg Biohub Intercampus Research Award.

\bibliography{rf}
\bibliographystyle{plainnat}

\appendix

\section{Proof of Theorem \ref{thm:bias_variance_permissible}} \label{sec:proof_of_bias_variance}

We first state and prove a tighter and more complicated version of the bias-variance decomposition of the expected risk. For notational convenience, we denote
$$
\hat\E_{\cell}\braces{y} \coloneqq \hat\E\braces{y|\bx \in \cell} = \frac{1}{N(\cell)}\sum_{\bx^{(i)} \in \cell} y^{(i)}.
$$

\begin{proposition}[Bias-variance decomposition of risk] \label{prop:bias_variance_risk}
    Assume the regression model \eqref{eq:regression_model}. Given a partition $\partition$ and a training set $\data$, the expected squared error risk satisfies the following upper and lower bounds
    \begin{equation} \label{eq:bias_variance_lower_bound}
        \E \risk(\hat f(-; \partition, \data)) \geq \sum_{\cell \in \partition}\Var\braces{f(\bx)~|~\bx \in \cell}\left(\measure\braces{\cell}+\frac{1}{n}\right) + \frac{|\partition|\sigma^2}{n} + E_1 - E_2
    \end{equation}
    \begin{equation} \label{eq:bias_variance_upper_bound}
        \E \risk(\hat f(-; \partition, \data)) \leq \sum_{\cell \in \partition}\Var\braces{f(\bx)~|~\bx \in \cell}\left(\measure\braces{\cell}+\frac{6}{n}\right) + \frac{6|\partition|\sigma^2}{n} + E_1
    \end{equation}
    where
    \begin{equation*}
        E_1 = \sum_{\cell \in \partition} \E\braces{f(\bx)~|~\bx \in \cell}^2 \left(1-\measure\braces{\cell}\right)^n \measure\braces{\cell}.
    \end{equation*}
    \begin{equation*}
        E_2 = \frac{1}{n}\sum_{\cell \in \partition} \left(\Var\braces{f(\bx)~|~\bx \in \cell} + \sigma^2\right) \left(1-\measure\braces{\cell}\right)^n.
    \end{equation*}
\end{proposition}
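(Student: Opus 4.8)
The plan is to reduce the $\dim$-dimensional risk to a sum, over cells, of one-dimensional mean-estimation problems, and then to bound the inverse moments of the (binomial) number of training points falling in each cell.

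\emph{Cellwise reduction.} Since $\partition$ is fixed and $\hat f(-;\partition,\data)$ is the constant $\hat\E_{\cell}\braces{y}$ on each cell $\cell$, the test risk splits over cells. For a test point conditioned on $\braces{\bx\in\cell}$ and any constant $c$ not depending on that test point, $\E\braces*{(c-f(\bx))^2\mid\bx\in\cell} = \Var\braces{f(\bx)\mid\bx\in\cell} + \paren*{c-\E\braces{f(\bx)\mid\bx\in\cell}}^2$; applying this with $c=\hat\E_{\cell}\braces{y}$ gives the identity, valid pointwise in $\data$,
$$
\risk(\hat f(-;\partition,\data)) = \sum_{\cell\in\partition}\measure\braces{\cell}\paren*{\Var\braces{f(\bx)\mid\bx\in\cell} + \paren*{\hat\E_{\cell}\braces{y}-\E\braces{f(\bx)\mid\bx\in\cell}}^2}.
$$
Taking $\E$ over $\data$, the $\Var\braces{f(\bx)\mid\bx\in\cell}$ terms are deterministic and supply the approximation-error summand appearing in both \eqref{eq:bias_variance_lower_bound} and \eqref{eq:bias_variance_upper_bound}; the rest of the argument evaluates $\E\paren*{\hat\E_{\cell}\braces{y}-\E\braces{f(\bx)\mid\bx\in\cell}}^2$ for a single cell.

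\emph{Conditioning on the cell count.} Write $p\coloneqq\measure\braces{\cell}$, so $N(\cell)\sim\mathrm{Bin}(n,p)$, and recall that given $N(\cell)=k$ the in-cell samples are i.i.d.\ with the conditional law of $\bx$ given $\braces{\bx\in\cell}$ and responses generated by \eqref{eq:regression_model}. On $\braces{N(\cell)=0}$ the convention forces $\hat\E_{\cell}\braces{y}=0$, so the squared error equals $\E\braces{f(\bx)\mid\bx\in\cell}^2$, and summing $\measure\braces{\cell}\E\braces{f(\bx)\mid\bx\in\cell}^2(1-p)^n$ over cells yields exactly $E_1$. On $\braces{N(\cell)=k\ge1}$, $\hat\E_{\cell}\braces{y}$ is an average of $k$ i.i.d.\ responses whose common conditional mean is $\E\braces{f(\bx)\mid\bx\in\cell}$ and whose common conditional variance is $\Var\braces{f(\bx)\mid\bx\in\cell}+\sigma^2$ (using that $\epsilon$ is mean zero given $\bx$, which kills the cross term, and homoskedasticity, which contributes $\sigma^2$), so the conditional squared error is $\paren*{\Var\braces{f(\bx)\mid\bx\in\cell}+\sigma^2}/k$. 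Combining the two cases,
$$
\E\paren*{\hat\E_{\cell}\braces{y}-\E\braces{f(\bx)\mid\bx\in\cell}}^2 = \E\braces{f(\bx)\mid\bx\in\cell}^2(1-p)^n + \paren*{\Var\braces{f(\bx)\mid\bx\in\cell}+\sigma^2}\,\E\braces*{\frac{\indicator\braces{N(\cell)\ge1}}{N(\cell)}}.
$$

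\emph{Inverse binomial moments.} It remains to control $m(n,p)\coloneqq\E\braces{N^{-1}\indicator\braces{N\ge1}}$ for $N\sim\mathrm{Bin}(n,p)$ from both sides. For the upper bound I would use $N^{-1}\indicator\braces{N\ge1}\le 2(N+1)^{-1}$ together with the exact identity $\E\braces{(N+1)^{-1}}=\frac{1-(1-p)^{n+1}}{(n+1)p}$, giving $p\,m(n,p)\le\tfrac{2}{n+1}$; multiplying the last display by $\measure\braces{\cell}$, summing over $\cell$, and splitting $\Var\braces{f(\bx)\mid\bx\in\cell}+\sigma^2$ into its two pieces then yields \eqref{eq:bias_variance_upper_bound}, where the generous constant $6$ absorbs this step together with the handling of $E_1$. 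For the lower bound the key estimate is $\E\braces{N^{-1}\mid N\ge1}\ge\E\braces{N\mid N\ge1}^{-1}=\frac{1-(1-p)^n}{np}$ by Jensen (using $\E\braces{N\mid N\ge1}=np/(1-(1-p)^n)$), whence $p\,m(n,p)\ge\tfrac1n\paren{1-(1-p)^n}^2$; the discrepancy between $\paren{1-(1-p)^n}^2$ and $1-(1-p)^n$, which is of order $(1-p)^n$, is precisely what collects, after summing over cells, into the negative correction $E_2$ in \eqref{eq:bias_variance_lower_bound}.

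\emph{From here, Theorem~\ref{thm:bias_variance_permissible} and the main obstacle.} Theorem~\ref{thm:bias_variance_permissible} follows by discarding $E_1\ge0$ and, on a permissible partition where $(1-\measure\braces{\cell})^n\le e^{-1}$, bounding the surviving $\sigma^2$ contribution below by $\tfrac{\abs{\partition}\sigma^2}{2n}$ and relaxing the upper-bound constants. The first two steps are essentially bookkeeping; the substance lies in the two-sided control of $m(n,p)$ and, more annoyingly, in arranging that all of the slack lands exactly in the stated forms $E_1$ and $E_2$. The lower bound is the delicate side, because $m(n,p)$ is governed by the left tail of $N$ precisely in the regime $p\asymp 1/n$ relevant to the smallest (but still permissible) cells, which is also why the cruder $N\le n$ bound $m(n,p)\ge n^{-1}\paren{1-(1-p)^n}$ is not enough and one is forced to go through the Jensen estimate above.
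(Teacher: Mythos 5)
Your overall strategy is the same as the paper's: an exact cellwise reduction of the risk, conditioning on $N(\cell)\sim\mathrm{Bin}(n,\measure\braces{\cell})$, isolating the event $N(\cell)=0$ into $E_1$, and two-sided control of $m(n,p)=\E\braces{N^{-1}\indicator\braces{N\geq1}}$. On the upper side your route is in fact cleaner and sharper than the paper's: the exact identity $\E\braces{(N+1)^{-1}}=\frac{1-(1-p)^{n+1}}{(n+1)p}$ together with $N^{-1}\indicator\braces{N\geq1}\leq 2(N+1)^{-1}$ gives $p\,m(n,p)\leq 2/(n+1)$, whereas the paper's Chebyshev argument only gives $6/n$; either constant suffices for \eqref{eq:bias_variance_upper_bound}, and your handling of $E_1$ is exact, so the upper bound is fully established.

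The lower bound is where you need to be more careful, because your closing assertion that the Jensen slack ``is precisely'' $E_2$ does not check out. Jensen gives $p\,m(n,p)\geq\frac{(1-(1-p)^n)^2}{n}=\frac{1}{n}-\frac{(1-p)^n\paren{2-(1-p)^n}}{n}\geq\frac{1}{n}-\frac{2(1-p)^n}{n}$, and summing over cells therefore produces the correction $-2E_2$, not $-E_2$. This is not a defect you can repair: the inequality $m(n,p)\geq\frac{1-(1-p)^n}{np}$ that the stated $-E_2$ would require is false (for $N\sim\mathrm{Bin}(2,1/2)$ the left side is $5/8$ and the right side is $3/4$), and indeed \eqref{eq:bias_variance_lower_bound} itself fails for the permissible partition of $[0,1]$ into two cells of measure $1/2$ with $n=2$ and $f\equiv 0$: the exact expected risk is $\frac{5\sigma^2}{8}$, while the claimed bound evaluates to $\frac{2\sigma^2}{2}-E_2=\frac{3\sigma^2}{4}$. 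The paper's own proof makes the corresponding slip, applying Cauchy--Schwarz as $\P\braces{N\neq 0}\leq\E\braces{N}\,\E\braces{N^{-1}\indicator\braces{N\neq 0}}$ when the correct conclusion has $\P\braces{N\neq 0}^2$ on the left. So your argument is sound and proves the proposition with $E_2$ replaced by $2E_2$ (equivalently, with $(1-\measure\braces{\cell})^n$ replaced by $(1-\measure\braces{\cell})^n\paren{2-(1-\measure\braces{\cell})^n}$ inside $E_2$), which is the correct statement; for permissible partitions $2E_2\leq\frac{2e^{-1}}{n}\sum_{\cell\in\partition}\paren{\Var\braces{f(\bx)\,|\,\bx\in\cell}+\sigma^2}$, so Theorem~\ref{thm:bias_variance_permissible} survives with $\frac{\abs{\partition}\sigma^2}{2n}$ weakened to $(1-2e^{-1})\frac{\abs{\partition}\sigma^2}{n}\geq\frac{\abs{\partition}\sigma^2}{4n}$, and everything downstream changes only by constants. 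You should state this correction explicitly rather than claiming the slack matches $E_2$ exactly.
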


\begin{proof}
    First consider a cell $\cell$, with $\bx \in \cell$. Supposing that $N(\cell)\neq0$, We have
    \begin{align*}
        \E_{\data}\braces*{\left(f(\bx) - \hat\E_{\cell}\braces*{y} \right)^2~|~ N(\cell)} & = \E_{\data}\braces*{\left(f(\bx) - \frac{1}{N(\cell)} \sum_{\bx^{(i)} \in \cell} \left(m(\bx^{(i)}) + \epsilon_i \right) \right)^2~\vline~ N(\cell)} \\
        & = \E_{\data}\braces*{\left(f(\bx) - \frac{1}{N(\cell)} \sum_{\bx^{(i)} \in \cell} m(\bx^{(i)})\right)^2~\vline~ N(\cell)} + \frac{\sigma^2}{N(\cell)}.
    \end{align*}
    Taking a further conditional expectation with respect to $\bx \in \cell$, we see that the distribution $\bx$ is the same as that of each $\bx^{(i)}$. We can therefore compute
    $$
    \E_\bx\braces*{\E_{\data}\braces*{\left(f(\bx) - \frac{1}{N(\cell)} \sum_{\bx^{(i)} \in \cell} m(\bx^{(i)})\right)^2~\vline~ N(\cell)} ~\vline~ \bx \in \cell } = \left(1 + \frac{1}{N(\cell)}\right)\Var\braces{f(\bx) ~|~\bx \in \cell}.
    $$
    Putting these two calculations together, and interchanging the order of expectation, we have
    \begin{align*}
        \E_{\data}\braces*{\E_{\bx}\braces*{\left(f(\bx) - \hat\E_{\cell}\braces*{y} \right)^2 ~|~ \bx \in \cell}~|~ N(\cell)} & = \E_{\bx}\braces*{\E_{\data}\braces*{\left(f(\bx) - \hat\E_{\cell}\braces*{y} \right)^2~|~ N(\cell)} ~|~ \bx \in \cell} \\
        & = \Var\braces{f(\bx) ~|~\bx \in \cell} + \frac{\Var\braces{f(\bx) ~|~\bx \in \cell} + \sigma^2}{N(\cell)}.
    \end{align*}
    
    Recall our convention that we set $\hat\E_{\cell}\braces*{y} = 0$ if $N(\cell)=0$. We may then write
    \begin{align} \label{eq:decomposing_MSE}
        \E_{\data,\bx}\braces*{\left(f(\bx) - \hat f(\bx) \right)^2~\vline~ \bx \in \cell} & = \E_{\data,\bx}\braces*{\left(f(\bx) - \hat f(\bx) \right)^2\indicator\braces{N(\cell)\neq0}~\vline~ \bx \in \cell} \nonumber \\
        & \quad\quad + \E_{\data,\bx}\braces*{\left(f(\bx) - \hat f(\bx) \right)^2\indicator\braces{N(\cell)=0}~\vline~ \bx \in \cell} \nonumber\\
        & = \E_{\data}\braces*{\E_{\data}\braces*{\E_{\bx}\braces*{\left(f(\bx) - \hat\E_{\cell}\braces*{y} \right)^2 ~\vline~ \bx \in \cell}~\vline~ N(\cell)}\indicator\braces{N(\cell)\neq0}} \nonumber\\
        & \quad\quad + \E\braces*{f(\bx)^2~\vline~ \bx \in \cell}\P\braces*{N(\cell)=0} \nonumber \\
        & = \Var\braces{f(\bx) ~|~\bx \in \cell}\P\braces*{N(\cell)\neq0} + \E\braces*{f(\bx)^2~\vline~ \bx \in \cell}\P\braces*{N(\cell)=0} \nonumber\\
        & \quad\quad + \left(\Var\braces{f(\bx) ~|~\bx \in \cell} + \sigma^2\right)\E_{\data}\braces*{\frac{\indicator\braces*{N(\cell)\neq0}}{N(\cell)}}
    \end{align}
    
    Note that
    $$
    \E\braces*{f(\bx)^2~\vline~ \bx \in \cell} = \Var\braces{f(\bx) ~|~\bx \in \cell} + \E\braces*{f(\bx)~\vline~ \bx \in \cell}^2
    $$
    so that the first two terms on the right hand side of \eqref{eq:decomposing_MSE} can be rewritten as
    \begin{align*}
        \Var\braces{f(\bx) ~|~\bx \in \cell} + \E\braces*{f(\bx)~\vline~ \bx \in \cell}^2\P\braces*{N(\cell)=0}
    \end{align*}
    
    We continue by providing upper and lower bounds for $\E_{\data}\braces*{\frac{\indicator\braces*{N(\cell)\neq0}}{N(\cell)}}$. For the upper bound, recall that $N(\cell)$ is a binomial random variable, and so has variance smaller than its expectation. This allows to apply Chebyshev's inequality to get
    \begin{align*}
        \P\braces*{\left| N(\cell) - \E N(\cell) \right| \geq \frac{\E N(\cell)}{2}} & \leq \frac{\E N(\cell)}{\left(\frac{1}{2}\E N(\cell)\right)^2} \\
        & = \frac{4}{\E N(\cell)}
    \end{align*}
    Next, since $\frac{\indicator\braces*{N(\cell)\neq0}}{N(\cell)} \leq 1$ we have
    \begin{align*}
        \E\braces*{\frac{\indicator\braces*{N(\cell)\neq0}}{N(\cell)}} & \leq \frac{2}{\E N(\cell)}\P\braces*{N(\cell) \geq \frac{\E N(\cell)}{2}} + \P\braces*{N(\cell) \leq \frac{\E N(\cell)}{2}} \\
        & \leq \frac{2}{\E N(\cell)} + \frac{4}{\E N(\cell)} \\
        & = \frac{6}{\E N(\cell)}.
    \end{align*}
    The right hand side of \eqref{eq:decomposing_MSE} is bounded above by
    $$
    \Var\braces{f(\bx) ~|~\bx \in \cell} + \E\braces*{f(\bx)~\vline~ \bx \in \cell}^2\P\braces*{N(\cell)=0} + \frac{6}{\E N(\cell)}\left(\Var\braces{f(\bx) ~|~\bx \in \cell} + \sigma^2\right)
    $$
    
    Observe that $\P\braces*{N(\cell)=0} = (1-\nu\braces{\cell})^n$, and $\E N(\cell) = n\nu\braces{\cell}$. Taking expectation with respect to $\bx$ then gives
    \begin{align*}
            \E_{\data,\bx}\braces*{\left(f(\bx) - \hat f(\bx) \right)^2} & \leq \sum_{\cell \in \partition}\Var\braces{f(\bx) ~|~\bx \in \cell}\nu\braces{\cell} + \sum_{\cell \in \partition} \E\braces*{f(\bx)~\vline~ \bx \in \cell}^2 (1-\nu\braces{\cell})^n\nu\braces{\cell} \\
    & \quad\quad +  \frac{6}{n}\sum_{\cell \in \partition}  \left(\Var\braces{f(\bx) ~|~\bx \in \cell} + \sigma^2\right).
    \end{align*}
    Rearranging this gives \eqref{eq:bias_variance_upper_bound}.
    
    The lower bound is follows from Cauchy-Schwarz. We have
    \begin{align*}
        \P\braces*{N(\cell)\neq0} & = \E\braces*{\frac{N(\cell)^{1/2}\indicator\braces*{N(\cell)\neq0}}{N(\cell)^{1/2}}} \\
        & \leq \E\braces*{N(\cell)} \E\braces*{\frac{\indicator\braces*{N(\cell)\neq0}}{N(\cell)}}.
    \end{align*}
    Rearranging this gives
    \begin{align*}
        \E\braces*{\frac{\indicator\braces*{N(\cell)\neq0}}{N(\cell)}} & \geq \frac{\P\braces*{N(\cell)\neq0}}{\E N(\cell)} \\
        & = \frac{1- \P\braces*{N(\cell)=0}}{\E N(\cell)}.
    \end{align*}
    
    The right hand side of \eqref{eq:decomposing_MSE} is bounded below by
    $$
    \Var\braces{f(\bx) ~|~\bx \in \cell} + \E\braces*{f(\bx)~\vline~ \bx \in \cell}^2\P\braces*{N(\cell)=0} + \frac{1- \P\braces*{N(\cell)=0}}{\E N(\cell)}\left(\Var\braces{f(\bx) ~|~\bx \in \cell} + \sigma^2\right).
    $$
    Taking expectation with respect to $\bx$ then gives
    \begin{align*}
        \E_{\data,\bx}\braces*{\left(f(\bx) - \hat f(\bx) \right)^2} & \geq \sum_{\cell \in \partition}\Var\braces{f(\bx) ~|~\bx \in \cell}\nu\braces{\cell} + \frac{1}{n}\sum_{\cell \in \partition}  \left(\Var\braces{f(\bx) ~|~\bx \in \cell} + \sigma^2\right). \\
        & \quad + \sum_{\cell \in \partition} \left(\E\braces*{f(\bx)~\vline~ \bx \in \cell}^2\nu\braces{\cell} - \frac{\Var\braces{f(\bx) ~|~\bx \in \cell} + \sigma^2}{n}\right) (1-\nu\braces{\cell})^n.
    \end{align*}
    Rearranging this gives \eqref{eq:bias_variance_lower_bound}.
\end{proof}


\begin{proof}[Proof of Theorem \ref{thm:bias_variance_permissible}]
    We use the fact that for any cell $\cell$,
    $$
    \paren*{1-\nu\braces{\cell}}^n \leq \paren*{1-\frac{1}{n}}^n \leq \frac{1}{2}.
    $$
    As such, the term $E_2$ in \eqref{eq:bias_variance_lower_bound} is at most
    $$
    E_2 \leq \frac{1}{2n}\sum_{\cell \in \partition} \Var\braces*{f(\bx)~|~\bx \in \cell} + \frac{\abs*{\partition}\sigma^2}{2n}.
    $$
    After performing cancellations, we get \eqref{eq:simplified_bias_variance_lower_bound}. The upper bound follows similarly.
\end{proof}

\section{Proofs for rate-distortion argument} \label{sec:rate_distortion_proofs}

\begin{lemma}[Rates over product distributions] \label{lem:rate_for_product}
    Suppose $\measure = \nu_1 \times \nu_2 \times \cdots \times \nu_d$ is a product distribution on $\mathcal{X} \subset \R^d$. For all $D > 0$, we have
    $$
    R(D;\nu,\bbeta) \geq \inf_{\sum_j \beta_j^2 D_j \leq D } \sum_{j=1}^\dim R(D_j;\nu_j,1).
    $$
\end{lemma}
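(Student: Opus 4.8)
The plan is to use the classical fact from rate--distortion theory that for a product source under an additive (separable) distortion measure the rate--distortion function decomposes over coordinates, and to derive the stated inequality directly from the definition of $R(D;\measure,\bbeta)$. Concretely, fix any conditional law $p_{\hat\bx\mid\bx}$ that is feasible for $R(D;\measure,\bbeta)$, i.e.\ $\delta(\measure\, p_{\hat\bx\mid\bx};\bbeta) = \E\norm{\bx-\hat\bx}_{\bbeta}^2 \leq D$, and set $D_j \coloneqq \E\braces{(x_j-\hat x_j)^2}$ for $j \in \coordindices$. Since $\norm{\bx-\hat\bx}_{\bbeta}^2 = \sum_j \beta_j^2 (x_j-\hat x_j)^2$, we have $\sum_j \beta_j^2 D_j = \delta(\measure\, p_{\hat\bx\mid\bx};\bbeta) \le D$, so $(D_1,\ldots,D_d)$ is a feasible point for the infimum on the right-hand side; it therefore suffices to prove $I(\bx;\hat\bx) \ge \sum_{j=1}^d R(D_j;\nu_j,1)$ and then take the infimum over all feasible $p_{\hat\bx\mid\bx}$.

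The heart of the argument is the inequality $I(\bx;\hat\bx) \ge \sum_{j=1}^d I(x_j;\hat x_j)$, which uses only that $x_1,\ldots,x_d$ are independent under $\measure$. I would obtain it from the chain rule for mutual information: writing $x_{<j} = (x_1,\ldots,x_{j-1})$, one has $I(\bx;\hat\bx) = \sum_j I(x_j;\hat\bx\mid x_{<j})$; then $I(x_j;\hat\bx\mid x_{<j}) \ge I(x_j;\hat x_j\mid x_{<j})$ since $\hat x_j$ is a coordinate of $\hat\bx$ (data processing, applied conditionally on each value of $x_{<j}$), and $I(x_j;\hat x_j\mid x_{<j}) \ge I(x_j;\hat x_j)$ because $x_j \perp x_{<j}$ forces $I(x_j;\hat x_j\mid x_{<j}) = I(x_j;\hat x_j) + I(x_j;x_{<j}\mid \hat x_j) \ge I(x_j;\hat x_j)$. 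Summing over $j$ gives the claim. (When the relevant discrete or differential entropies are finite, one can instead write $I(\bx;\hat\bx) = \sum_j h(x_j) - h(\bx\mid\hat\bx) \ge \sum_j h(x_j) - \sum_j h(x_j\mid\hat x_j) = \sum_j I(x_j;\hat x_j)$, using subadditivity of conditional entropy and the fact that conditioning reduces entropy.)

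To conclude, observe that the one-dimensional marginal conditional $p_{\hat x_j\mid x_j}$ induced by $p_{\hat\bx\mid\bx}$ satisfies $\E\braces{(x_j-\hat x_j)^2} = D_j$, and $x_j$ has marginal $\nu_j$; hence by the definition of the scalar rate--distortion function, $I(x_j;\hat x_j) \ge R(D_j;\nu_j,1)$. Combining with the previous step, $I(\bx;\hat\bx) \ge \sum_j R(D_j;\nu_j,1) \ge \inf_{\sum_j \beta_j^2 D_j' \le D}\sum_j R(D_j';\nu_j,1)$, and taking the infimum over all feasible $p_{\hat\bx\mid\bx}$ on the left yields the lemma.

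I do not expect a genuine obstacle, since this is essentially a textbook decomposition; the point that needs the most care is measure-theoretic bookkeeping. In our intended applications $\hat\bx$ will be supported on finitely many atoms (the centroids $\bz(\cell)$) while $\bx$ may be continuous, so joint differential entropies need not exist and I would keep the chain-rule argument phrased purely in terms of mutual information — which is well defined in full generality — invoking the entropy form only as an aside when it applies. A secondary minor point is the degenerate case $D_j=0$, for which $R(D_j;\nu_j,1)$ may equal $+\infty$; this is harmless, as it only strengthens the lower bound.
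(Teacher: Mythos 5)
Your proposal is correct and follows essentially the same route as the paper's proof: both reduce to the single-letterization $I(\bx;\hat\bx)\ge\sum_j I(x_j;\hat x_j)$ using independence of the coordinates, then invoke the definition of the scalar rate--distortion function with $D_j=\E\{(x_j-\hat x_j)^2\}$ and the feasibility constraint $\sum_j\beta_j^2 D_j\le D$. The only (welcome) refinements are that you phrase the key inequality via the chain rule for mutual information rather than via differential entropies --- which sidesteps existence issues when $\hat\bx$ is discrete --- and that you optimize over all feasible conditionals rather than assuming the infimum is attained.
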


\begin{proof}
    Let $\hat\bx$ follow the conditional distribution achieving the infimum in the definition of $R(D;\nu,\bbeta)$. Following the calculations in Chapter 10 of \cite{cover2012elements}, we have
    \begin{align*}
        R(D;\nu,\bbeta) & = I(\bx;\hat\bx) \\
        & = \sum_{j=1}^d h(x_j) - \sum_{j=1}^d h(x_j|x_{1:j-1},\hat \bx) \\
        & \geq \sum_{j=1}^d h(x_j) - \sum_{j=1}^d h(x_j|\hat x_j) \\
        & = \sum_{j=1}^d I(x_j;\hat x_j).
    \end{align*}
    Note that $x_j \sim \nu_j$ for $j=1,\ldots,d$. Denoting $\delta_j \coloneqq \E\braces*{\paren{x_j-\hat x_j}^2}$ for each $j$, we can therefore write
    $$
    I(x_j;\hat x_j) \geq R(D_j;\nu_j,1).
    $$
    Finally, observe that
    $$
    \sum_{j=1}^d \beta_j^2 \delta_j = \E\braces*{\norm*{\bx - \hat \bx}_{\bbeta}^2} \leq D,
    $$
    so taking the infimum over possible values of $\delta_j$'s satisfying this constraint gives us the statement of the lemma.
\end{proof}

\begin{lemma}[Rates for dominated weighted norms] \label{lem:rate_for_dominated_norm}
    Let $\bbeta$ and $\bbeta'$ be two vectors such as $\beta_j^2 \geq (\beta_j')^2$ for $j=1,\ldots,\dim$. Then for all $D > 0$, we have
    $$
    R(D;\nu,\bbeta) \geq R(D;\nu,\bbeta').
    $$
\end{lemma}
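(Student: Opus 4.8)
The plan is to exploit the fact that the mutual information objective $I(\bx;\hat\bx)$ appearing in the definition of the rate-distortion function does not depend on $\bbeta$ at all; only the distortion constraint does. So it suffices to show that the feasible set of conditional distributions for the $\bbeta'$-problem contains that for the $\bbeta$-problem, and then appeal to monotonicity of the infimum under enlarging the feasible set.

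First I would observe that the componentwise domination $\beta_j^2 \geq (\beta_j')^2$ immediately gives a pointwise domination of the weighted norms: for any $\bx, \hat\bx \in \mathcal{X}$,
$$
\norm{\bx - \hat\bx}_{\bbeta'}^2 = \sum_{j=1}^d (\beta_j')^2 (x_j - \hat x_j)^2 \leq \sum_{j=1}^d \beta_j^2 (x_j - \hat x_j)^2 = \norm{\bx - \hat\bx}_{\bbeta}^2.
$$
Taking expectations under any joint law $p$ on $\mathcal{X} \times \mathcal{X}$ yields $\delta(p;\bbeta') \leq \delta(p;\bbeta)$. Next I would apply this with $p = \nu\, p_{\hat\bx|\bx}$ for an arbitrary conditional distribution $p_{\hat\bx|\bx}$ that is feasible for the $\bbeta$-problem, i.e.\ satisfies $\delta(\nu\, p_{\hat\bx|\bx};\bbeta) \leq D$. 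The inequality above then shows $\delta(\nu\, p_{\hat\bx|\bx};\bbeta') \leq D$ as well, so $p_{\hat\bx|\bx}$ is also feasible for the $\bbeta'$-problem. Hence the infimum defining $R(D;\nu,\bbeta')$ ranges over a superset of the conditionals over which $R(D;\nu,\bbeta)$ is defined, and since the objective $I(\bx;\hat\bx)$ is identical in both, we conclude $R(D;\nu,\bbeta') \leq R(D;\nu,\bbeta)$.

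There is essentially no obstacle here: the statement is a one-line ``monotonicity of infimum over nested feasible sets'' argument, and the only thing to be slightly careful about is keeping straight the direction of the norm domination (larger weights produce larger distortion, hence a smaller feasible set, hence a larger rate). I would flag that this lemma is exactly what lets one, in the proof of Theorem \ref{thm:linear_model_lower_bound}, replace $\bbeta$ by the truncated vector supported on the $s$ large coordinates with all entries equal to $\beta_0$, after which Lemma \ref{lem:rate_for_product} reduces the bound to a sum of univariate rate-distortion terms.
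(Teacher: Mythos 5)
Your proposal is correct and is exactly the argument the paper has in mind: the paper's proof is the one-liner ``Obvious from the definition of the rate as an infimum,'' and your nested-feasible-sets reasoning (pointwise domination of the weighted norms implies domination of distortions, hence the $\bbeta$-feasible conditionals form a subset of the $\bbeta'$-feasible ones, with the same objective $I(\bx;\hat\bx)$) is precisely the filled-in version of that remark.
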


\begin{proof}
    Obvious from the definition of the rate as an infimum.
\end{proof}

\begin{lemma}[Univariate rates] \label{lem:rate_for_univariate}
    Let $\nu_0$ be a continuous distribution on $\R$. Then we have
    $$
    R(D;\nu_0,1) \geq \paren*{h(\nu_0) - \frac{1}{2}\log\paren*{2\pi e D}} \vee 0.
    $$
    If $\nu_0$ is Bernoulli with parameter $0 < \pi_0 \leq 1/2$, then we have the tighter bound:
    $$
    R(D;\nu_0,1) \geq \paren*{H(\pi_0) - H(D)} \vee 0.
    $$
\end{lemma}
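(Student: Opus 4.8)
Both inequalities are instances of the classical Shannon lower bound on the rate–distortion function, specialized to the two source laws. Fix an arbitrary conditional law $p_{\hat x\mid x}$ whose joint $p=\nu_0\,p_{\hat x\mid x}$ satisfies the distortion constraint $\E[(x-\hat x)^2]\le D$. It suffices to bound $I(x;\hat x)$ from below by the claimed quantity, and then take the infimum over all such $p_{\hat x\mid x}$, which by definition is $R(D;\nu_0,1)$. In both cases $I(x;\hat x)\ge 0$ disposes of the ``$\vee\,0$'' part, so only the unbounded estimate needs to be established.

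For the continuous case, write $I(x;\hat x)=h(x)-h(x\mid\hat x)$. Since differential entropy is translation invariant (conditionally on $\hat x$, replacing $x$ by $x-\hat x$ is a shift), $h(x\mid\hat x)=h(x-\hat x\mid\hat x)$; conditioning only decreases differential entropy, so $h(x-\hat x\mid\hat x)\le h(x-\hat x)$; and the Gaussian maximizes differential entropy among laws with a prescribed second moment, so $h(x-\hat x)\le\tfrac12\log\!\bigl(2\pi e\,\E[(x-\hat x)^2]\bigr)\le\tfrac12\log(2\pi e D)$, using that $\log$ is increasing. Chaining these gives $I(x;\hat x)\ge h(\nu_0)-\tfrac12\log(2\pi e D)$. (The manipulations remain valid when the entropies are $\pm\infty$, in which case the bound is either vacuous or automatic.)

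For the Bernoulli case, write instead $I(x;\hat x)=H(x)-H(x\mid\hat x)=H(\pi_0)-H(x\mid\hat x)$, so it suffices to show $H(x\mid\hat x)\le H(D)$. This is the binary rate–distortion converse: when the reconstruction is $\{0,1\}$-valued the squared-error and Hamming distortions coincide, so $H(x\mid\hat x)=H(x\oplus\hat x\mid\hat x)\le H(x\oplus\hat x)=H\!\bigl(\P[x\ne\hat x]\bigr)=H\!\bigl(\E[(x-\hat x)^2]\bigr)\le H(D)$, where the last step uses $\E[(x-\hat x)^2]\le D\le\tfrac12$, the identity $H(q)=H(1-q)$ to place the argument in $[0,\tfrac12]$ if needed, and monotonicity of the binary entropy on $[0,\tfrac12]$.

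The step I expect to be the crux is the one just used in the Bernoulli case: making precise that, for the purpose of the lower bound, one may restrict the reconstruction $\hat x$ to be $\{0,1\}$-valued — which is in any event precisely the regime relevant to the Boolean application in Lemma~\ref{lem:rate_distortion_lower_bound}, where each centroid-like point $\bz(\cell)$ lies in $\{0,1\}^d$. All the remaining ingredients — translation invariance, ``conditioning reduces entropy,'' the maximum-entropy characterization of the Gaussian, monotonicity of entropy, and the chain-rule identity $H(x\mid\hat x)=H(x\oplus\hat x\mid\hat x)$ — are routine and follow the development in Chapter~10 of \cite{cover2012elements}.
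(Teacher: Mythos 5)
Your proof is correct and follows essentially the same route as the paper's: the standard Shannon lower bound chain $I(x;\hat x)=h(x)-h(x\,|\,\hat x)=h(x)-h(x-\hat x\,|\,\hat x)\geq h(x)-h(x-\hat x)$, followed by the Gaussian (resp.\ binary) maximum-entropy bound, with $I\geq 0$ handling the $\vee\,0$. Your Bernoulli step is in fact a more careful rendering of the paper's terse claim that $H(x-\hat x)=H(D)$, and the point you flag as the crux --- restricting $\hat x$ to be $\{0,1\}$-valued --- is already built into the paper's definition of the rate-distortion function, since the joint law is required to live on $\mathcal{X}\times\mathcal{X}$.
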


\begin{proof}
    We once again follow the calculations in Chapter 10 of \cite{cover2012elements}. Let $\hat x$ follow the conditional distribution achieving the infimum in the definition of $R(D;\nu_0,1)$. Then \begin{align*}
        R(D;\nu_0,1) & = I(x;\hat x) \\
        & = h(x) - h(x|\hat{x}) \\
        & = h(x) - h(x-\hat{x}|\hat{x}) \\
        & \geq h(x) - h(x-\hat{x}).
    \end{align*}
    Next, we use the maximum entropy property of the normal distribution to write
    $$
    h(x-\hat{x}) \leq \frac{1}{2}\log\paren*{2\pi e D}.
    $$
    Combining this with the observation that mutual information is non-negative completes the proof of the first statement. For the second statement, we repeat the same arguments with discrete entropy, and observe that
    $$
    H(x-\hat{x}) = H(D).
    $$
\end{proof}

\begin{lemma}[Rate bound for Boolean covariates]\label{lem:rate_for_boolean}
    Assume the conditions of Theorem \ref{thm:lower_bound_boolean_additive_model_general}. The rate distortion function may be lowered bounded as
    \begin{equation}\label{eq:optimal_boolean_rate}
        R(D;\nu,\bbeta) \geq \sum_{j \colon \beta^{2}_j \geq \colon m_{\bbeta,\bpi}^{-1}(D)\log((1-\pi_j)/(\pi_j))} H(\pi_j) - H\left(\frac{1}{1+e^{\beta^{2}_{j}/m_{\bbeta,\bpi}^{-1}(D)}}\right)
    \end{equation}
\end{lemma}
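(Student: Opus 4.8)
The plan is to reduce the multivariate rate--distortion function to a sum of univariate Bernoulli rate functions, and then solve the resulting reverse water-filling problem explicitly. First I would invoke Lemma~\ref{lem:rate_for_product}, which under the product structure of $\nu$ gives
\[
R(D;\nu,\bbeta) \;\geq\; \inf_{\sum_j \beta_j^2 D_j \leq D}\; \sum_{j=1}^\dim R\paren*{D_j; \mathrm{Ber}(\pi_j), 1}.
\]
Since each $\pi_j \leq 1/2$, the Bernoulli part of Lemma~\ref{lem:rate_for_univariate} gives $R(D_j;\mathrm{Ber}(\pi_j),1) \geq (H(\pi_j) - H(D_j)) \vee 0$, so it suffices to lower bound
\[
\inf_{\sum_j \beta_j^2 D_j \leq D}\; \sum_{j=1}^\dim \paren*{H(\pi_j) - H(D_j)} \vee 0.
\]

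Next I would simplify this program. Each term $(H(\pi_j) - H(D_j))\vee 0$ is nonnegative and equals $0$ when $D_j = \pi_j$, so reducing any coordinate with $D_j > \pi_j$ down to $\pi_j$ can only decrease the objective while preserving the budget constraint; hence there is an optimizer with $D_j \in [0,\pi_j]$ for all $j$. On that domain $H(D_j) \le H(\pi_j)$ (as $H$ is increasing on $[0,1/2]$), the $\vee 0$ is inactive, and the objective becomes $\sum_j H(\pi_j) - \sum_j H(D_j)$, a strictly convex function; minimizing it amounts to maximizing $\sum_j H(D_j)$ over the polytope $\{\,0 \le D_j \le \pi_j,\ \sum_j \beta_j^2 D_j \le D\,\}$, which has a unique solution. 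Writing the KKT conditions with multiplier $\mu \ge 0$ for the budget and $\lambda_j \ge 0$ for the caps, and using $H'(p) = \log\frac{1-p}{p}$, stationarity yields on uncapped coordinates $D_j = (1+e^{\mu\beta_j^2})^{-1}$, which lies strictly below $\pi_j$ exactly when $\beta_j^2 > \mu^{-1}\log\frac{1-\pi_j}{\pi_j}$; on the remaining coordinates the cap binds and $D_j = \pi_j$. Setting $t := 1/\mu$, the budget equation $\sum_j \beta_j^2 D_j = D$ (tight whenever $D < \sum_j \beta_j^2 \pi_j$, which is the only nontrivial regime since otherwise the claimed right-hand side is $\le 0$ and $R \ge 0$) determines $t$ through $D = m_{\bbeta,\bpi}(t)$, where
\[
m_{\bbeta,\bpi}(t) \;=\; \sum_{j\,:\,\beta_j^2 > t\log\frac{1-\pi_j}{\pi_j}} \frac{\beta_j^2}{1 + e^{\beta_j^2/t}} \;+\; \sum_{j\,:\,\beta_j^2 \le t\log\frac{1-\pi_j}{\pi_j}} \beta_j^2 \pi_j
\]
is the same function appearing in Theorem~\ref{thm:lower_bound_boolean_additive_model_general}. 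I would check that each summand is continuous across its threshold (at $\beta_j^2 = t\log\frac{1-\pi_j}{\pi_j}$ one has $(1+e^{\beta_j^2/t})^{-1} = \pi_j$), so $m_{\bbeta,\bpi}$ is continuous and strictly increasing from $0$ to $\sum_j \beta_j^2\pi_j$ and hence invertible, giving $t = m_{\bbeta,\bpi}^{-1}(D)$. Substituting the optimal $D_j$ back into $\sum_j H(\pi_j) - H(D_j)$ — the capped coordinates contributing $0$ — produces exactly the right-hand side of \eqref{eq:optimal_boolean_rate}, after rewriting $H\bigl((1+e^{\beta_j^2/t})^{-1}\bigr)$ as $H\paren*{\frac{1}{1+e^{\beta_j^2/t}}}$ (no change is needed) or via the symmetry $H(p) = H(1-p)$.

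The main obstacle is the reverse water-filling bookkeeping around the $\vee 0$: one must verify that restricting to $D_j \le \pi_j$ loses nothing, that the split of coordinates into "capped" and "uncapped" is internally consistent with a single Lagrange multiplier $\mu$, and — the subtlest point — that the implicit relation $D = m_{\bbeta,\bpi}(t)$ pins down a unique $t$ even though the index sets defining $m_{\bbeta,\bpi}$ themselves vary with $t$. Once the continuity and strict monotonicity of $m_{\bbeta,\bpi}$ are in hand, the rest is a routine Lagrangian computation relying only on $H'(p) = \log\frac{1-p}{p}$ and the strict concavity of $H$.
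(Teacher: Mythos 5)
Your proposal follows essentially the same route as the paper's proof: reduce via Lemma~\ref{lem:rate_for_product} and the Bernoulli case of Lemma~\ref{lem:rate_for_univariate} to the convex program $\inf_{\sum_j \beta_j^2 D_j \le D} \sum_j \paren*{H(\pi_j)-H(D_j)}\vee 0$, then solve it by a Lagrangian/KKT computation to obtain the reverse water-filling solution $D_j^* = \pi_j \wedge (1+e^{\beta_j^2/\alpha})^{-1}$ with $\alpha = m_{\bbeta,\bpi}^{-1}(D)$ and substitute back. Your additional bookkeeping around the $\vee 0$ truncation and the continuity and strict monotonicity of $m_{\bbeta,\bpi}$ is correct and, if anything, slightly more careful than the paper's version of the same argument.
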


\begin{proof}
    Combining Lemmas \ref{lem:rate_for_product} and \ref{lem:rate_for_univariate}, we get 
    \begin{equation} \label{eq:rate_lower_bound_for_boolean_linear_opt}
        R(D;\nu,\bbeta) \geq \inf_{\sum_j \beta_j^2D_j \leq D} \sum_{j=1}^\dim \paren*{ H(\pi_j) - H(D_j)}\vee 0.
    \end{equation}
    The right hand side is equivalent to the solution of the following convex optimization program: 
    $$
    \min~ \sum_{j=1}^\dim  H(\pi_j) - H(\delta_j) \quad \text{s.t.}\quad \sum_{j=1}^\dim \beta_j^2\delta_j \leq D,\quad \delta_j \leq \pi_{j}~\text{for}~j=1,2,\ldots,d.
    $$
    The Lagrangian of this program is
    $$
    L(\bdelta,\blambda) = \sum_{j=1}^\dim H(\pi_{j}) - H(\delta_{j})+ \lambda_0 \paren*{\sum_{j=1}^\dim \beta_j^2\delta_{j} - D} + \sum_{j=1}^\dim \lambda_j(\delta_j - \pi_{j}).
    $$
    Differentiating with respect to $\delta_j$, we get
    $$
    \frac{dL}{d\delta_j} = \log\paren*{\frac{\delta_{j}}{1-\delta_j}} + \lambda_0\beta^{2}_{j} + \lambda_j.
    $$
    Let $\delta_j^*$, $j=1,\ldots,\dim$ and $\lambda_j^*$, $j = 0,\ldots \dim$ denote the solution to KKT conditions. The above equation yields
    $$
    \delta_j^* = \frac{1}{1+e^{\lambda^*_j+\lambda^*_0\beta_j^{2}}}.
    $$
    By complementary slackness,  we have either $\lambda_j^*=0$ or $\delta_j^* = \pi_j$ for each $j$. It is easy to see that this implies
    $$
    \delta_j^* =  \pi_j \wedge \frac{1}{1+e^{\beta_j^{2}/\alpha}} 
    $$
    where $\alpha$ is chosen so that
    $$
    D = \sum_{j=1}^\dim \beta_j^{2}\delta_{j}^* = m_{\bbeta,\bpi}(\alpha).
    $$
    Plugging these values of $\delta_{j}$ into \eqref{eq:rate_lower_bound_for_boolean_linear_opt} gives us \eqref{eq:optimal_boolean_rate}.
\end{proof}

\begin{theorem}[Lower bounds for additive Boolean models]
\label{thm:lower_bound_boolean_additive_model_general}
    Assume the regression model \eqref{eq:regression_model} and that the conditional mean function is linear: $f(\bx) = \bbeta^{T}\bx$. Assume that the covariate space is the hypercube $\hypercube$, and that the covariates are independent, with $x_j \sim \text{Ber}(\pi_j)$, $0 \leq \pi_j \leq \frac{1}{2}$, for $j=1,\ldots,d$. Define the function $m_{\bbeta,\bpi}\colon \paren*{0, \displaystyle\max_j \frac{\beta_j^2}{\log((1-\pi_j)/\pi_j)}} \to \R$ via the formula
    \begin{equation} \label{eq:definition_of_boolean_h}
        m_{\bbeta,\bpi}(\alpha) =  \sum_{j=1}^d \beta_j^2 \paren*{\pi_j \wedge \frac{1}{1+e^{\beta_j^2/\alpha}}}
    \end{equation}
    and notice that it is strictly increasing and hence invertible on its domain. Then we have
    \begin{equation} \label{eq:risk_boolean_lower_bound}
        \oraclerisk(f,\nu,n) \geq \frac{1}{2}\inf_{D > 0} \braces*{D + \frac{\sigma^2 2^{R(D)}}{n}},
    \end{equation}
    where
    $$
    R(D) = \sum_{j \colon \beta^{2}_j \geq \colon m_{\bbeta,\bpi}^{-1}(D)\log((1-\pi_j)/(\pi_j))} H(\pi_j) - H\left(\frac{1}{1+e^{\beta^{2}_{j}/m_{\bbeta,\bpi}^{-1}(D)}}\right).
    $$
    In particular, if $\pi_{j} = \pi$ for $j = 1 \ldots d$, and  $\displaystyle \min_{j \in S} \abs*{\beta_j} \geq \beta_0 > 0$ for some subset of indices $S$ of size $s$, then we have
    \begin{equation} 
        \oraclerisk(f,\nu,n) \geq \frac{s\beta^2_0}{2}\paren*{1 - \paren*{\frac{2e^sn\beta^2_0}{2^{sH(\pi)}\sigma^2}}^{\frac{1}{s-1}}}.
    \end{equation}
\end{theorem}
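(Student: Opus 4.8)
The plan is to derive the specialized bound directly from the rate-distortion lower bound \eqref{eq:risk_boolean_lower_bound} (equivalently, from Lemma \ref{lem:rate_distortion_lower_bound}, whose hypotheses --- Boolean cube, independent covariates, linear $f$ --- are exactly those assumed here), and then to produce an explicit, easily-optimized lower bound on the rate function $R(D)$ in the homogeneous case. First I would collapse the problem onto the $s$ informative coordinates: since $\beta_j^2 \ge \beta_0^2$ for $j \in S$ and trivially $\beta_j^2 \ge 0$ otherwise, Lemma \ref{lem:rate_for_dominated_norm} lets us replace $\bbeta$ by the vector equal to $\beta_0$ on $S$ and $0$ off $S$, only decreasing $R(D;\nu,\bbeta)$. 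Chaining Lemma \ref{lem:rate_for_product} with the Bernoulli case of Lemma \ref{lem:rate_for_univariate} then gives
\[
R(D;\nu,\bbeta)\;\ge\;\inf_{\beta_0^2\sum_{j\in S}D_j\le D}\;\sum_{j\in S}\paren*{H(\pi)-H(D_j)}\vee 0 .
\]
The integrand is convex in each $D_j$ (as $H$ is concave) and the program is symmetric under permutations of $S$, so the infimum is attained at $D_j = D/(s\beta_0^2)$, yielding $R(D)\ge s\paren*{H(\pi)-H\paren*{D/(s\beta_0^2)}}$ whenever $D/(s\beta_0^2)\le \tfrac12$.

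The binary entropy makes the ensuing one-dimensional minimization over $D$ non-elementary, so the key step is to replace it with a tractable bound. I would use $t^t\ge e^{-1/e}\ge 1/e$ together with $(1-t)^{1-t}\ge 1-t$ to get $2^{H(t)}\le e/(1-t)$, hence the loose but explicit rate bound
\[
2^{R(D)}\;\ge\;\frac{2^{sH(\pi)}}{e^{s}}\paren*{1-\frac{D}{s\beta_0^2}}^{s},\qquad 0<D<s\beta_0^2 ,
\]
where on the sub-range $D/(s\beta_0^2)>\tfrac12$ one verifies that the right-hand side is already below $1\le 2^{R(D)}$, so the estimate remains a valid lower bound there too even though the symmetrized formula above no longer applies.

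It then remains to minimize $g(D)=D+\tfrac{\sigma^2}{n}e^{-s}2^{sH(\pi)}\bigl(1-\tfrac{D}{s\beta_0^2}\bigr)^{s}$ over $D\in(0,s\beta_0^2)$, which is elementary calculus. Setting $g'(D)=0$ gives $1-D_*/(s\beta_0^2)=\bigl(e^{s}n\beta_0^2/(2^{sH(\pi)}\sigma^2)\bigr)^{1/(s-1)}$; discarding the nonnegative term $\tfrac{\sigma^2}{n}2^{R(D)}$ leaves $\inf g\ge D_*=s\beta_0^2\bigl(1-(A/2)^{1/(s-1)}\bigr)\ge s\beta_0^2\bigl(1-A^{1/(s-1)}\bigr)$ with $A$ as in the statement, while for $D\ge s\beta_0^2$ one trivially has $g(D)\ge s\beta_0^2$, so the endpoint does not bind; multiplying by the factor $\tfrac12$ from Lemma \ref{lem:rate_distortion_lower_bound} produces the stated bound. (When $A\ge 1$ the claimed bound is vacuous, so one only needs the computation for $A<1$.)

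The main obstacle is this middle step: one needs an upper bound on $2^{H(t)}$ that is simultaneously explicit enough to make the $D$-optimization solvable in closed form with exactly the exponent $1/(s-1)$, and tight enough that the resulting lower bound is not vacuous for the interesting regime of $(n,s,\sigma^2,\pi)$ --- the bound $2^{H(t)}\le e/(1-t)$ is the natural candidate, but one must pair it with the case analysis above so that the loose estimate stays below $2^{R(D)}$ even on the range where the exact rate has already fallen to zero.
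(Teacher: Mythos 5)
Your proposal is correct and follows essentially the same route as the paper: reduce to the symmetric vector via Lemma \ref{lem:rate_for_dominated_norm}, combine Lemmas \ref{lem:rate_for_product} and \ref{lem:rate_for_univariate} to get $R(D)\geq s\paren*{H(\pi)-H\paren*{D/(s\beta_0^2)}}\vee 0$, relax the binary entropy by the same elementary inequality the paper isolates as Lemma \ref{lem:small_lemma_for_bool} to obtain $2^{R(D)}\geq 2^{sH(\pi)}e^{-s}\paren*{1-D/(s\beta_0^2)}^s$, and then optimize in $D$ and drop the variance term. The only cosmetic differences are that you resolve the convex program by a symmetry argument where the paper runs the KKT computation of Lemma \ref{lem:rate_for_boolean}, and that you explicitly check the range where the symmetrized rate formula degenerates to zero, which the paper leaves implicit.
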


\begin{proof}
    The first statement in the theorem follows immediately from plugging in the bound from Lemma \ref{lem:rate_for_boolean} into Lemma \ref{lem:rate_distortion_lower_bound}. For the second statement, we first use Lemma \ref{lem:rate_for_dominated_norm} to see that it suffices to bound $R(D;\nu,\tilde\bbeta)$, where $\tilde\beta_j = \beta_0$ for $j \in S$, and $\tilde\beta_j = 0$ for $j \notin S$. One can check that
    $$
    m_{\tilde\bbeta,\bpi}(\alpha) = s\beta_0^2 \paren*{\pi \wedge \frac{1}{1+e^{\beta_0^2/\alpha}}},
    $$
    and so
    $$
    \frac{1}{1+e^{\beta^{2}_{j}/m_{\tilde\bbeta,\bpi}^{-1}(D)}} = \frac{D}{s \beta^2_0} \wedge \pi.
    $$
    Plugging this formula into \eqref{eq:optimal_boolean_rate}, we get
    \begin{equation} \label{eq:rate_distortion_final_bound_for_linear_boolean}
        R(D;\nu,\tilde\bbeta) \geq s \paren*{H(\pi) - H\paren*{\frac{D}{s\beta^{2}_{0}}}} \vee 0.
    \end{equation}
    
    For $\frac{D}{s\beta_0^2} \leq \pi$, we expand
    \begin{align} \label{eq:loose_rate_bound_boolean}
        2^{sR(D;\measure,\bbeta)} & \geq \paren*{\frac{D}{s\beta_0^2}}^{\frac{D}{\beta_0^2}}\paren*{1- \frac{D}{s\beta_0^2}}^{s\paren*{1-\frac{D}{s\beta_0^2}}} \nonumber \\
        & \geq e^{-s}\paren*{1- \frac{D}{s\beta_0^2}}^s,
    \end{align}
    where the second inequality comes from applying Lemma \ref{lem:small_lemma_for_bool}.
    Optimizing the expression
    $$
    D + \frac{\sigma^2}{n}\paren*{\frac{2^{H(\pi)}}{e}}^s\paren*{1- \frac{D}{s\beta_0^2}}^s
    $$
    in $D$, we see that the minimum is achieved at 
    $$
    D = s\beta^2_0\paren*{1 - \paren*{\frac{2e^sn\beta^2_0}{2^{sH(\pi)}\sigma^2}}^{\frac{1}{s-1}}}.
    $$
    Finally, plugging this into equation  \eqref{eq:rate_distortion_lower_bound_for_linear} completes the proof. 
    
\end{proof}

\begin{lemma} \label{lem:small_lemma_for_bool}
    For any $0 < p \leq \frac{1}{2}$, we have $\paren*{\frac{p}{1-p}}^p \geq e^{-1}$.
\end{lemma}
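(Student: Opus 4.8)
The plan is to take logarithms and reduce the claim to a one–variable inequality that can be dispatched by elementary calculus. Writing
\[
g(p) \coloneqq p\log\paren*{\frac{p}{1-p}} = p\log p - p\log(1-p),
\]
the lemma is equivalent to the assertion that $g(p)\ge -1$ for all $p\in(0,1/2]$, since $\paren*{\frac{p}{1-p}}^p = e^{g(p)}$. (In fact the argument below will give this for all $p\in(0,1)$.)

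First I would bound the two summands in $g$ separately. On $(0,1)$ we have $\log(1-p)<0$, so the term $-p\log(1-p)$ is strictly positive; hence $g(p)\ge p\log p$. It therefore suffices to lower bound $p\log p$ on $(0,1]$. Here I would use that $\frac{d}{dp}\paren*{p\log p} = \log p + 1$ vanishes only at $p=e^{-1}$, that $p\log p$ is convex (second derivative $1/p>0$), and that $p\log p\to 0$ as $p\to 0^+$; consequently $p=e^{-1}$ is the global minimizer on $(0,1]$, with value $-e^{-1}$. Thus $g(p)\ge p\log p\ge -e^{-1} > -1$, which yields $\paren*{\frac{p}{1-p}}^p \ge e^{-1/e} > e^{-1}$, with room to spare.

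An even cleaner route that bypasses the restriction $p\le \tfrac12$ entirely: for $p\in(0,1)$ one has $\frac{p}{1-p}\ge p$ (equivalently $1-p\le 1$), and since $x\mapsto x^p$ is increasing for $x>0$, this gives $\paren*{\frac{p}{1-p}}^p\ge p^p$; then the same bound $p^p = e^{p\log p}\ge e^{-1/e}$ finishes the job. I do not anticipate a real obstacle in either version — the only mildly delicate point is confirming that the interior critical point of $p\log p$ is a genuine global minimum on $(0,1]$, which is immediate from convexity together with the boundary value $\lim_{p\to 0^+} p\log p = 0$; and since the obtained constant $e^{-1/e}\approx 0.69$ sits well above the claimed $e^{-1}\approx 0.37$, no sharp estimation is required.
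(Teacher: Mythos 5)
Your proof is correct, and it takes a different route from the paper's. The paper bounds $\log\paren*{\tfrac{p}{1-p}} = -\log\paren*{\tfrac{1}{p}-1}$ from below using the elementary inequality $\log x \leq x-1$, obtaining $p\log\paren*{\tfrac{p}{1-p}} \geq p\paren*{2 - \tfrac{1}{p}} = 2p - 1 \geq -1$. You instead split $p\log\paren*{\tfrac{p}{1-p}} = p\log p - p\log(1-p)$, discard the nonnegative second term, and minimize $p\log p$ by convexity to get the uniform bound $-1/e$. Both arguments are one-line elementary estimates and both in fact hold on all of $(0,1)$ without the restriction $p\leq\tfrac12$; the difference is in the constant delivered: your route gives the uniform constant $e^{-1/e}\approx 0.69$, whereas the paper's gives $e^{2p-1}$, which degrades to exactly $e^{-1}$ as $p\to 0^+$. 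Since the lemma is only used to pass from $\paren*{\tfrac{D}{s\beta_0^2}}^{D/\beta_0^2}$ to $e^{-s}$ in the display preceding it, either constant suffices, though your sharper one would marginally improve the constant $e^s$ appearing in Theorem \ref{thm:lower_bound_boolean_additive_model}. Your second, even shorter route via $\tfrac{p}{1-p}\geq p$ and $p^p\geq e^{-1/e}$ is also valid.
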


\begin{proof}
    We compute
    \begin{align*}
        \log\paren*{\frac{p}{1-p}} & = - \log\paren*{\frac{1-p}{p}} \\
        & = - \log\paren*{\frac{1}{p}-1} \\
        & \geq -\frac{1}{p} + 2.
    \end{align*}
    As such, we have
    $$
    \paren*{\frac{p}{1-p}}^p =\exp\paren*{p\log\paren*{\frac{p}{1-p}}} \geq e^{-1}.
    $$
\end{proof}

\section{Proofs for covering argument} \label{sec:covering_proof}

The primary goal of this section is to prove the following more general version of Theorem \ref{thm:lower_bound_additive_models}.

\begin{theorem}[Lower bound for additive models on unit length cube] \label{thm:lower_bound_additive_models_general}
    Assume the regression model \eqref{eq:regression_model}, with $f$ be defined as in \eqref{eq:additive_model}, and assume that the covariate space is the unit length cube $[0,1]^\dim$. Suppose $\phi_j \in C^1([0,1])$ for $j=1,\ldots,\dim$. Let $I_1, I_2,\ldots, I_d \subset [0,1]$ be sub-intervals, and suppose that $\bbeta \in \R^d$ is a vector of non-negative values such that for each $j=1,\ldots,\dim$,
    $$
    \min_{t \in I_j} \abs*{\phi_j'(t)} \geq \beta_j.
    $$
    Denote $\mathcal{K} = \braces*{\bx \colon x_j \in I_j ~\text{for}~j=1,\ldots,d}$. Assume that $\nu$ is a continuous distribution with density $q$, and denote $q_{min} = \displaystyle\min_{\bx \in \mathcal{K}} q(\bx)$. Define the function $g_{\bbeta}\colon [0, \displaystyle\max_j \beta_j] \to \R$ via the formula
    \begin{equation} \label{eq:definition_of_h}
        g_{\bbeta}(\alpha) = \alpha^2\abs*{\braces*{j \colon \beta_j \geq \alpha}} + \sum_{j \colon \beta_j < \alpha} \beta_j^2,
    \end{equation}
    and notice that it is strictly increasing and thus invertible on its domain. Then the oracle expected risk is lower bounded as
    \begin{equation} \label{eq:general_additive_model_lower_bound}
        \oraclerisk(f,\measure,n) \geq \inf_{D>0} \braces*{D + \frac{\mu(\mathcal K)\sigma^2}{4n} \prod_{j \colon \beta_j \geq g_{\bbeta}^{-1}(12D/q_{min}\mu(\mathcal K))}\paren*{\frac{\beta_j}{g_{\bbeta}^{-1}(12D/q_{min}\mu(\mathcal K))}}},
    \end{equation}
    with the convention that $g_{\bbeta}^{-1}(t) = \infty$ whenever $t$ is out of the range of $g_{\bbeta}$. In particular, if $\displaystyle \min_{j \in S} \min_{t \in I_j} \abs*{\phi_{j}'(t)} \geq \beta_0 > 0$ for some subset of indices $S \subset \coordindices$ of size $s$, then we have
    \begin{equation} 
        \oraclerisk(f,\measure,n) \geq s\mu(\mathcal K) \paren*{\frac{\beta_0^2q_{min}}{12}}^{s/(s+2)} \paren*{\frac{\sigma^2}{4n}}^{2/(s+2)}.
    \end{equation}
\end{theorem}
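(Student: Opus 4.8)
The plan is to execute the strategy sketched at the end of Section~\ref{sec:additive_model_bounds}: feed the bias--variance lower bound \eqref{eq:simplified_bias_variance_lower_bound} into a geometric covering argument, which plays the role that rate--distortion played in Section~\ref{sec:rate_distortion} but is run directly in terms of cell volumes. Fix a permissible partition $\partition$. By \eqref{eq:simplified_bias_variance_lower_bound} it suffices to bound $\sum_{\cell\in\partition}\Var\{f(\bx)\mid\bx\in\cell\}\,\measure\{\cell\}+\tfrac{|\partition|\sigma^2}{2n}$ below uniformly in $\partition$, and the two ingredients are: (i) a lower bound on the bias contributed by one cell in terms of its side lengths, and (ii) the observation that a partition with small total bias is forced to have many leaves.

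For (i), write a box $\cell$ as $\prod_j[a_j,b_j]$, set $\ell_j:=\mu([a_j,b_j]\cap I_j)$ and $v_\cell:=\prod_j\ell_j=\mu(\cell\cap\mathcal{K})$. Dropping the part of the cell outside $\mathcal{K}$ and using $q\ge q_{min}$ on $\mathcal{K}$,
\[
\Var\{f\mid\cell\}\measure\{\cell\}=\min_{t}\E[(f(\bx)-t)^2\indicator\{\bx\in\cell\}]\ \ge\ q_{min}\min_{t}\int_{\cell\cap\mathcal{K}}(f(\bx)-t)^2\,d\bx .
\]
Since $\cell\cap\mathcal{K}$ is a box, Lebesgue measure there is a product, so the inner minimum equals $v_\cell$ times the variance of $f$ under the uniform law on $\cell\cap\mathcal{K}$, which by \eqref{eq:additive_model} splits over coordinates; as $|\phi_j'|\ge\beta_j>0$ makes $\phi_j$ monotone on $I_j$, the identity $\Var=\tfrac12\E[(\text{increment})^2]$ gives the per-coordinate bound $\Var(\phi_j)\ge\beta_j^2\ell_j^2/12$, whence
\[
\Var\{f(\bx)\mid\bx\in\cell\}\,\measure\{\cell\}\ \ge\ \frac{q_{min}}{12}\,v_\cell\sum_{j}\beta_j^2\ell_j^2 .
\]
Maximising $v_\cell=\prod_j\ell_j$ subject to $\sum_j\beta_j^2\ell_j^2\le\tau$ is a convex program whose optimiser has $\beta_j\ell_j$ equal to a common value on the active coordinates, and its value --- the largest volume a cell can have while keeping its bias contribution below a prescribed level --- is exactly what the function $g_\bbeta$ of \eqref{eq:definition_of_h} encodes; this is the \emph{variance and volume} lemma referenced in Section~\ref{sec:additive_model_bounds}.

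For (ii), note the cells cover $\mathcal{K}$, so $\sum_\cell v_\cell=\mu(\mathcal{K})$; a Markov-type argument then shows that whenever the total bias is at most $D$, a sub-collection of cells with total volume at least $\mu(\mathcal{K})/2$ has per-unit-volume bias $O(D/(q_{min}\mu(\mathcal{K})))$, hence (by (i)) individual volume at most $v^\star(D)$, so $|\partition|\ge\mu(\mathcal{K})/(2v^\star(D))$. Evaluating this with $D$ equal to the partition's own bias and relaxing to the infimum over $D>0$, \eqref{eq:simplified_bias_variance_lower_bound} together with the identification $1/v^\star(D)=\prod_{j:\beta_j\ge g_\bbeta^{-1}(12D/q_{min}\mu(\mathcal K))}\beta_j/g_\bbeta^{-1}(12D/q_{min}\mu(\mathcal K))$ yields \eqref{eq:general_additive_model_lower_bound}. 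For the sparse case, take $\bbeta$ with $\beta_j=\beta_0$ on $S$ and $\beta_j=0$ off $S$ (admissible since $|\phi_j'|\ge0$ trivially); then $g_\bbeta(\alpha)=s\alpha^2$ on $(0,\beta_0]$, so $g_\bbeta^{-1}(t)=\sqrt{t/s}$ and, for $D\le s\beta_0^2q_{min}\mu(\mathcal{K})/12$, the product runs over $S$ and equals $(s\beta_0^2q_{min}\mu(\mathcal{K})/(12D))^{s/2}$. A one-variable optimisation of $D\mapsto D+\tfrac{\mu(\mathcal{K})\sigma^2}{4n}(s\beta_0^2q_{min}\mu(\mathcal{K})/(12D))^{s/2}$ puts the minimiser at $D$ of order $n^{-2/(s+2)}$; substituting it and using the elementary bound $(1+2/s)^{(s+2)/2}\ge2$ to lower-bound the result by the advertised form gives $\oraclerisk(f,\measure,n)\ge s\mu(\mathcal{K})(\beta_0^2q_{min}/12)^{s/(s+2)}(\sigma^2/(4n))^{2/(s+2)}$.

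The main obstacle is part (i) together with the covering count in (ii): establishing the single-cell variance--volume inequality with the density cleanly handled (the truncation to $\mathcal{K}$ and the factor $q_{min}$), and then solving the constrained volume-maximisation precisely so that its extremal value is the product featuring $g_\bbeta^{-1}$ --- this is the step that pins down the exact form of \eqref{eq:general_additive_model_lower_bound}. Care is also needed so that the Markov-type reduction from \emph{small total bias} to \emph{many small leaves} preserves the constant in front of $\sigma^2/n$. A minor loose end in the sparse case is to verify that the optimal $D$ falls in the range $D\le s\beta_0^2q_{min}\mu(\mathcal{K})/12$ on which $g_\bbeta^{-1}$ is finite (which holds once $n$ is not too small); otherwise one takes $D$ at the boundary of that range and the bound degrades only by a harmless constant.
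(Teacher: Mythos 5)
Your proposal follows essentially the same route as the paper's proof: the bias--variance lower bound \eqref{eq:simplified_bias_variance_lower_bound}, reduction to the uniform measure on $\mathcal{K}$ via $q \ge q_{min}$ and restriction to sub-rectangles, a per-cell variance-versus-side-length bound from the mean value theorem, the convex volume-maximisation whose extremal value is encoded by $g_{\bbeta}$, a Markov selection of a half-mass subcollection of cells to lower-bound $\abs*{\partition}$, and a one-variable optimisation over $D$ (with $\beta_j = 0$ off $S$ in the sparse case). The only discrepancy is constant bookkeeping: your per-coordinate bound $\Var(\phi_j) \ge \beta_j^2 \ell_j^2/12$ (which is what the $\tfrac12\E[(\text{increment})^2]$ identity actually yields) would propagate to $24D/q_{min}\mu(\mathcal{K})$ inside $g_{\bbeta}^{-1}$ rather than the $12D/q_{min}\mu(\mathcal{K})$ of \eqref{eq:general_additive_model_lower_bound}, changing the constants but not the $n^{-2/(s+2)}$ rate.
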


We first work with the uniform measure, and relate the conditional variance over a cell with the weighted sum of its squared side lengths. This is the equivalent of \eqref{eq:lower_bound_conditional_variane_rate} in the rate-distortion argument.

\begin{lemma}[Variance and side lengths] \label{lem:variance_and_sides}
    Let $\uniform$ be the uniform measure on $[0,1]^d$. Let $\cell \subset [0,1]^d$ be a cell. Let $f$ be an additive model as in \eqref{eq:additive_model}, and assume that each component function $\phi_j$ is continuously differentiable with $\beta_j \coloneqq \displaystyle{\min_{a_j \leq t \leq b_j}}\abs*{\phi_j'(t)}$, where $a_j$ and $b_j$ are the lower and upper limits respectively of $\cell$ with respect to coordinate $j$. Then we have
    \begin{equation} \label{eq:variance_and_sides}
        \Var_\mu\braces*{f(\bx)~|~\bx \in \cell} \geq \frac{1}{6}\sum_{j=1}^d \beta_j^2(b_j-a_j)^2.
    \end{equation}
\end{lemma}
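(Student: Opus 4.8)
The plan is to reduce the multivariate claim to a one-dimensional one using independence, and then to prove the one-dimensional version by a symmetrization identity combined with the fundamental theorem of calculus.

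First I would observe that, because $\uniform$ is the uniform measure on $[0,1]^d$, the conditional law of $\bx$ given $\bx \in \cell = [a_1,b_1]\times\cdots\times[a_d,b_d]$ is exactly the product of the uniform laws on the intervals $[a_j,b_j]$; in particular the coordinates are conditionally independent. Since $f$ is additive, this gives $\Var_\uniform\braces{f(\bx)\mid\bx\in\cell} = \sum_{j=1}^d \Var\braces{\phi_j(x_j)\mid x_j\in[a_j,b_j]}$, so it suffices to establish, for each $j$, a bound of the form $\Var\braces{\phi_j(U_j)} \geq c\,\beta_j^2(b_j-a_j)^2$, where $U_j$ is uniform on $[a_j,b_j]$, $\beta_j = \min_{[a_j,b_j]}\abs{\phi_j'}$, and $c$ is an absolute constant, and then sum over $j$.

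For the one-dimensional bound, fix $j$ and abbreviate $g = \phi_j$, $[a,b] = [a_j,b_j]$, $\beta = \beta_j$; the inequality is trivial when $\beta = 0$, so assume $\beta > 0$. Here I would use that $g'$ is continuous and never vanishes on $[a,b]$, so by the intermediate value theorem it keeps a constant sign there; replacing $g$ by $-g$ if necessary (which leaves the variance unchanged) I may assume $g' \geq \beta$ on $[a,b]$, whence $\abs{g(u)-g(u')} = \abs{\int_{u'}^u g'(t)\,dt} \geq \beta\abs{u-u'}$ for all $u,u'\in[a,b]$. Letting $U_j'$ be an independent copy of $U_j$ and using the elementary identity $\Var\braces{g(U_j)} = \frac12\E\braces{(g(U_j)-g(U_j'))^2}$ together with this pointwise bound, I get $\Var\braces{g(U_j)} \geq \frac12\beta^2\,\E\braces{(U_j-U_j')^2}$, and since $\E\braces{(U_j-U_j')^2} = 2\Var(U_j) = (b-a)^2/6$ this gives $\Var\braces{g(U_j)} \geq \beta^2(b-a)^2/12$. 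Summing over $j$ yields the conditional-variance lower bound for the cell.

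There is no genuine obstacle in this argument; the only two points requiring care are the constant-sign reduction for $g'$ (which is exactly where continuous differentiability of the component functions enters) and the value of the second moment $\E\braces{(U_j-U_j')^2}$, which pins the absolute constant. I should flag that the argument produces the constant $\frac{1}{12}$, and that this is sharp, with equality for $\phi_j(t)=\beta_j t$; hence the $\frac16$ in the displayed statement should read $\frac{1}{12}$, which is also the value that propagates downstream (cf.\ the factor $\beta_0^2 q_{min}/12$ in Theorem~\ref{thm:lower_bound_additive_models}).
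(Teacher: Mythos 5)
Your proof follows essentially the same route as the paper: decompose via conditional independence of the coordinates under the uniform law on a cell, apply the symmetrization identity $\Var\{Z\} = \tfrac{1}{2}\E\{(Z-Z')^2\}$, and then use a pointwise gradient lower bound coordinate by coordinate. Your extra constant-sign reduction for $\phi_j'$ is not needed --- the mean value theorem gives $|\phi_j(t)-\phi_j(t')| = |\phi_j'(\tilde t)|\,|t-t'| \ge \beta_j|t-t'|$ directly with no sign hypothesis, which is exactly what the paper does --- but this is cosmetic. More importantly, you have correctly identified a genuine factor-of-two slip in the published lemma: the paper's proof computes $\E\{\beta_j^2(t-t')^2\} = \beta_j^2(b_j-a_j)^2/6$ and then concludes the bound with constant $1/6$ ``immediately,'' forgetting the $\tfrac{1}{2}$ that was already extracted in the symmetrization step. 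The sharp constant is indeed $1/12$, witnessed with equality by $\phi_j(t)=\beta_j t$ (for which $\Var\{\phi_j(U)\}=\beta_j^2(b_j-a_j)^2/12 < \beta_j^2(b_j-a_j)^2/6$), so the displayed $1/6$ bound is in fact false as stated. One small correction to your closing aside, though: the $1/12$ that appears in Theorem~\ref{thm:lower_bound_additive_models} is not this same $1/12$. In that proof the $12$ arises as $6\times 2$, with the $6$ coming from this lemma via Lemma~\ref{lem:variance_and_volume} and the extra $2$ coming from the Markov-inequality step in the proof of Theorem~\ref{thm:lower_bound_additive_models_general}; propagating the corrected constant from this lemma would actually turn those downstream $12$'s into $24$'s, changing constants but not rates.
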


\begin{proof}
    Note that $\phi_1(x_1), \ldots, \phi(x_j)$ are independent given the uniform distribution on $\cell$. As such, we have
    $$
    \Var\braces*{f(\bx)~|~\bx \in \cell} = \sum_{j=1}^d \Var\braces*{\phi_j(x_j)~|~\bx \in \cell}.
    $$
    We can then further write
    $$
    \Var\braces*{\phi_j(x_j)~|~\bx \in \cell} = \frac{1}{2}\E\braces*{\paren*{\phi_j(t) - \phi_j(t')}^2}
    $$
    where $t, t'$ are independent random variables drawn uniformly from $[a_j,b_j]$. For fixed $t$, $t'$, we use the mean value theorem together with our lower bound on $|\phi_j'|$ to write
    $$
    \paren*{\phi_j(t) - \phi_j(t')}^2 = \phi_j'(\tilde{t})^2 \paren*{t-t'}^2 \geq \beta_j^2 \paren*{t-t'}^2.
    $$
    Since the expectation of the right hand side satisfies
    $$
    \E\braces*{\beta_j^2 \paren*{t-t'}^2} = \frac{\beta_j^2(b_j-a_j)^2}{6},
    $$
    we immediately obtain \eqref{eq:variance_and_sides}.
\end{proof}

Next, we use this to compute the maximum volume of a cell under a constraint on its conditional variance. This is similar to the argument in the proof of Theorem \ref{thm:linear_model_lower_bound}.

\begin{lemma}[Variance and volume] \label{lem:variance_and_volume}
    Assume the conditions of Lemma \ref{lem:variance_and_sides}, and that 
    $$
    \Var_\mu\braces*{f(\bx)~|~\bx \in \cell} \leq D.
    $$
    Then the volume of $\cell$ satisfies the upper bound
    \begin{equation}
    \mu(\cell) \leq \prod_{j \colon \beta_j \geq g_{\bbeta}^{-1}(6D)}\paren*{\frac{g_{\bbeta}^{-1}(6D)}{\beta_j}}.
    \end{equation}
\end{lemma}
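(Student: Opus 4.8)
The plan is to reduce the statement to an explicit constrained optimization and combine it with Lemma \ref{lem:variance_and_sides}. Write $\ell_j \coloneqq b_j - a_j \in [0,1]$ for the side lengths of $\cell$, so that $\mu(\cell) = \prod_{j=1}^d \ell_j$. Lemma \ref{lem:variance_and_sides} together with the hypothesis $\Var_\mu\braces{f(\bx)~|~\bx \in \cell} \leq D$ gives $\sum_{j=1}^d \beta_j^2 \ell_j^2 \leq 6D$. Since the $\ell_j$'s of our cell form a feasible point, it therefore suffices to upper bound the value of
\begin{equation*}
    \max\ \prod_{j=1}^d \ell_j \quad\text{s.t.}\quad \sum_{j=1}^d \beta_j^2 \ell_j^2 \leq 6D, \quad 0 \leq \ell_j \leq 1 \ \text{for}\ j = 1,\ldots,d,
\end{equation*}
and to show this value equals $\prod_{j\colon \beta_j \geq g_{\bbeta}^{-1}(6D)}\paren*{g_{\bbeta}^{-1}(6D)/\beta_j}$.

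Taking logarithms, this becomes maximizing the concave function $\sum_j \log \ell_j$ over a convex feasible region (a sublevel set of a convex quadratic intersected with a box), so the KKT conditions are both necessary and sufficient for global optimality, and the maximizer lies in the positive orthant. The next step is the standard ``water-filling'' analysis: either all $\ell_j = 1$ (which happens exactly when $6D \geq \norm{\bbeta}_2^2$, matching the trivial bound $\mu(\cell) \leq 1$ under the convention $g_{\bbeta}^{-1}(6D) = \infty$), or the quadratic constraint is active. Writing the stationarity condition $1/\ell_j = 2\lambda\beta_j^2\ell_j + \mu_j$ with $\mu_j \geq 0$ the multiplier for $\ell_j \leq 1$ and applying complementary slackness, one sees the optimizer has the form $\ell_j^\star = \min\braces{1,\ \alpha/\beta_j}$ for a single threshold $\alpha$. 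Substituting into the active constraint $\sum_j \beta_j^2(\ell_j^\star)^2 = 6D$ yields precisely $\alpha^2\abs{\braces{j\colon \beta_j \geq \alpha}} + \sum_{j\colon \beta_j < \alpha}\beta_j^2 = g_{\bbeta}(\alpha) = 6D$, hence $\alpha = g_{\bbeta}^{-1}(6D)$ using that $g_{\bbeta}$ is strictly increasing and invertible on its domain (as recorded in Theorem \ref{thm:lower_bound_additive_models_general}). Plugging $\ell_j^\star$ back into $\prod_j \ell_j$ gives the claimed product.

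The main obstacle I anticipate is the bookkeeping in the KKT analysis rather than any deep difficulty: one must verify that the candidate point $\ell_j^\star = \min\braces{1,\alpha/\beta_j}$ admits valid (nonnegative) multipliers, rule out the degenerate configurations (e.g. some $\beta_j = 0$, forcing $\ell_j = 1$, or $6D$ outside the range of $g_{\bbeta}$), and confirm that concavity of $\sum_j \log\ell_j$ upgrades this critical point to the global maximum. Once that is done, the inequality $\mu(\cell) = \prod_j \ell_j \leq \prod_j \ell_j^\star = \prod_{j\colon \beta_j \geq g_{\bbeta}^{-1}(6D)}\paren*{g_{\bbeta}^{-1}(6D)/\beta_j}$ is immediate.
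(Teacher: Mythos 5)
Your proposal is correct and follows essentially the same route as the paper: reduce via Lemma \ref{lem:variance_and_sides} to the constrained optimization $\max \prod_j \ell_j$ subject to $\sum_j \beta_j^2\ell_j^2 \leq 6D$ and $\ell_j \leq 1$, solve it by KKT and complementary slackness to obtain the water-filling solution $\ell_j^\star = (\alpha/\beta_j)\wedge 1$ with $\alpha = g_{\bbeta}^{-1}(6D)$, and substitute back. Your additional care about sufficiency of the KKT conditions (via concavity of $\sum_j \log\ell_j$) and the degenerate cases ($\beta_j = 0$, $6D$ outside the range of $g_{\bbeta}$) is a minor tightening of the same argument, not a different one.
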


\begin{proof}
    Let $l_j = b_j - a_j$ denote the side length of $\cell$ along coordinate $j$. By Lemma \ref{lem:variance_and_sides}, we have $\sum_{j=1}^d \beta_j^2(b_j-a_j)^2 \leq 6D$. Since $\mu(\cell) = \prod_{j=1}^d l_j$, we therefore need to solve the convex optimization problem:
    $$
    \min~ \prod_{j=1}^d l_j^{-1} \quad \text{s.t.}\quad \sum_{j=1}^d \beta_j^2l_j^2 \leq 6D,\quad l_j \leq 1~\text{for}~j=1,2,\ldots,d.
    $$
    The Lagrangian of this program is
    $$
    L(l,\lambda) = \prod_{j=1}^d l_j^{-1} + \lambda_0 \paren*{\sum_{j=1}^d \beta_j^2l_j^2 - 6D} + \sum_{j=1}^d \lambda_j(l_j - 1).
    $$
    Differentiating with respect to $l_j$, we get
    $$
    \frac{dL}{dl_j} = - l_j^{-2}\prod_{k\neq j} l_k^{-1} + 12\lambda_0 \beta_j^2 l_j + \lambda_j.
    $$
    
    Let $l_j^*$, $j=1,\ldots, d$ and $\lambda_j^*$, $j = 0,\ldots, d$ denote the solution to the KKT conditions. Our goal is to solve for the $l_j^*$'s. The above equation yields
    $$
    (l_j^*)^2 = \frac{1}{12\lambda_0^*\beta_j^2\prod_{j=1}^d l_j^*} - \frac{\lambda_j^* l_j^*}{12\lambda_0^*\beta_j^2}.
    $$
    Notice that the first term is proportional to $\beta_j^{-2}$. Furthermore, by complementary slackness, we have $\lambda_j^* = 0$ or $l_j^* = 1$ for each $j$. If the former holds, then the second term is equal to $0$, so that $l_j^* = \alpha/\beta_j$ for some constant $\alpha$. Putting everything together, we get
    \begin{equation} \label{eq:optimal_side_length}
        l_j^* = \frac{\alpha}{\beta_j} \wedge 1
    \end{equation}
    where $\alpha$ is chosen so that
    $$
    6D = \sum_{j=1}^d \beta_j^2(l_j^*)^2 = g_{\bbeta}(\alpha).
    $$
    This implies that
    $$
    \mu(\cell) \leq \prod_{j=1}^d l_j^* = \prod_{j \colon \beta_j \geq g_{\bbeta}^{-1}(6D)}\paren*{\frac{g_{\bbeta}^{-1}(6D)}{\beta_j}}
    $$
    as we wanted.
\end{proof}

The proof of this lemma allows us to interpret the function $g_{\bbeta}$: The value $g_{\bbeta}(\alpha)$ is a bound for the weighted sum of optimal squared side lengths, given the choice of $\alpha$ in \eqref{eq:optimal_side_length}. Hence, for any $D>0$, $g_{\bbeta}^{-1}(D)$ is the value of $\alpha$ that ensures that this weighted sum, and hence the conditional variance, is bounded by the value $D$.

We are now ready to combine these ingredients to prove Theorem \ref{thm:lower_bound_additive_models_general}. The proof proceeds by first using Lemmas \ref{lem:change_of_measure} and \ref{lem:restriction_to_sub_rectangle} to reduce to the case of uniform measure on a subset. Equipped with Lemma \ref{lem:variance_and_volume}, we can then lower bound the size of the partition using a volumetric argument. More detailed calculations involving the $g_{\bbeta}$ function yield the second statement.

\begin{proof}[Proof of Theorem \ref{thm:lower_bound_additive_models_general}]
    Let $\partition$ be any permissible partition. Define
    $$
    D \coloneqq \sum_{\cell \in \partition} \Var_\nu\braces{f(\bx)~|~\bx \in \cell}\nu\braces{\cell}.
    $$
    The goal is now to find a lower bound on $\abs*{\partition}$ in terms of $D$. To do this, we first transform the above bound via Lemma \ref{lem:change_of_measure} and Lemma \ref{lem:restriction_to_sub_rectangle}  to get
    \begin{align*}
        D \geq q_{min}\sum_{\cell \in \partition} \Var_\mu\braces{f(\bx)~|~\bx \in \cell\cap\mathcal{K}}\mu\braces{\cell\cap \mathcal{K}}.
    \end{align*}
    Dividing both sides by $q_{min}\mu(\mathcal{K})$, we get the expression
    $$
    \tilde D \geq \sum_{\cell \in \partition} \tilde p(\cell) \Var_\mu\braces{f(\bx)~|~\bx \in \cell\cap\mathcal{K}}
    $$
    where $\tilde D \coloneqq \frac{D}{q_{min}\mu(\mathcal{K})}$, and the weights $\tilde p(\cell) \coloneqq \frac{\mu(\cell\cap \mathcal{K})}{\mu(\mathcal{K})}$ satisfy $\sum_{\cell \in \partition} \tilde p(\cell) = 1$. By Markov's inequality, we can therefore find a subcollection $\mathfrak{C} \subset \partition$ such that the following two conditions hold:
    \begin{equation} \label{eq:Markov_pt_1}
        \sum_{\cell \in \mathfrak{C}} \tilde p (\cell) \geq \frac{1}{2}
    \end{equation}
    and
    \begin{equation} \label{eq:cond_variance_bound_in_covering_arg}
        \Var_\mu\braces{f(\bx)~|~\bx \in \cell\cap \mathcal K} \leq 2\tilde D
    \end{equation}
    for $\cell \in \mathfrak{C}$. We now proceed as follows. First, we rewrite \eqref{eq:Markov_pt_1} as
    $$
    \sum_{\cell \in \mathfrak{C}} \mu\braces*{\cell \cap \mathcal K} \geq \frac{\mu(\mathcal K)}{2}.
    $$
    Next, noting that each $\cell \cap \mathcal K$, being the intersection of two cells, is itself a cell, we can make use of Lemma \ref{lem:variance_and_volume} and \eqref{eq:cond_variance_bound_in_covering_arg} to get
    $$
    \mu(\cell \cap \mathcal{K}) \leq \prod_{j \colon \beta_j \geq g_{\bbeta}^{-1}(12D/q_{min}\mu(\mathcal K))}\paren*{\frac{g_{\bbeta}^{-1}(12D/q_{min}\mu(\mathcal K ))}{\beta_j}}.
    $$
    Combining the last two statements and rearranging gives
    \begin{equation} \label{eq:lower_bound_partition_size}
        \abs*{\mathfrak C} \geq \frac{\mu(\mathcal K)}{2} \prod_{j \colon \beta_j \geq g_{\bbeta}^{-1}(12D/q_{min}\mu(\mathcal K))}\paren*{\frac{\beta_j}{g_{\bbeta}^{-1}(12D/q_{min}\mu(\mathcal K))}}.
    \end{equation}
    Since the right hand side of \eqref{eq:lower_bound_partition_size} is also a lower bound for $\abs*{\partition}$, we may plug it into \eqref{eq:simplified_bias_variance_lower_bound} to get the first statement of the theorem.
 

    
    For the second statement, it is easy to check that $g_{\bbeta}(\alpha) \geq s\alpha^2$ for $\alpha < \beta_0$, and so we have
    $$
    g_{\bbeta}^{-1}(12D/q_{min}\mu(\mathcal K)) \geq \paren*{\frac{12D}{sq_{min}\mu(\mathcal K)}}^{1/2}.
    $$
    This implies that
    $$
    \prod_{j \colon \beta_j \geq g_{\bbeta}^{-1}(12D/q_{min}\mu(\mathcal K))}\paren*{\frac{\beta_j}{g_{\bbeta}^{-1}(12D/q_{min}\mu(\mathcal K))}} = \paren*{\frac{s\beta_0^2q_{min}\mu(\mathcal K)}{12D}}^{s/2}.
    $$
    
    We plug this into the right hand side of \eqref{eq:general_additive_model_lower_bound} and differentiate to get
    $$
    1 - \frac{s\mu(\mathcal K)\sigma^2}{4n}\paren*{\frac{s\beta_0^2q_{min}\mu(\mathcal K)}{12}}^{s/2}D^{-s/2-1}.
    $$
    Setting this to be zero and solving for $D$ gives
    $$
    D = s\mu(\mathcal K) \paren*{\frac{\beta_0^2q_{min}}{12}}^{s/(s+2)} \paren*{\frac{\sigma^2}{4n}}^{2/(s+2)},
    $$
    which gives us the bound we want.
\end{proof}

\begin{proof}[Proof of Proposition \ref{prop:sparse_additive_upper_bound}]
    Similar to the proof of Lemma \ref{lem:variance_and_sides}, we can show that
    $$
    \Var_\mu\braces*{f(\bx)~|~\bx\in\cell} \leq \frac{\beta_{max}^2}{6} \sum_{j \in S} l_j(\cell)^2
    $$
    where $l_j(\cell)$ is the length of $\cell$ along coordinate $j$. As such, if we set
    $$
    l_j = \paren*{\frac{D}{6s\norm{q}_\infty\beta_{max}^2}}^{1/2}\wedge 1
    $$
    for $j \in S$ and $l_j = 1$ for $j \notin S$, then we have 
    $$
    \Var_\nu\braces*{f(\bx)~|~\bx\in\cell} \leq D.
    $$
    Tessellating the unit cube with cells of these dimensions gives us a valid partition, whose approximation error is upper bounded by $D$. We count how many cells are in this partition using a volumetric argument. Each cell has Euclidean volume at least $\paren*{\frac{D}{6s\norm{q}_\infty\beta_{max}^2}}^{s/2}$. Meanwhile, it is easy to see that the union of the cells is contained in a rectangular region with side lengths equal to 1 in the coordinates with index $j \notin S$, and equal to 2 in the coordinates with index $j \in S$. This means that there are at most $\paren*{\frac{24s\norm{q}_\infty\beta_{max}^2}{D}}^{s/2}$ cells.
    
    Choose
    $$
    D = 24s\norm{q}_\infty\beta_{max}^2\paren*{\frac{\sigma^2}{n}}^{2/(s+2)}.
    $$
    Then, the second term in \eqref{eq:simplified_bias_variance_upper_bound} is bounded by 
    $$
    \frac{6\sigma^2}{n}\paren*{\frac{24s\norm{q}_\infty\beta_{max}^2}{D}}^{s/2} \leq \frac{6\sigma^2}{n} \paren*{\frac{n}{\sigma^2}}^{\frac{2}{s+2}\cdot\frac{s}{2}} = 6\paren*{\frac{\sigma^2}{n}}^{2/(s+2)}.
    $$
    Finally, to take care of the error term $E(\partition)$, we compute
    \begin{align*}
        \paren*{1-\nu\braces{\cell}}^n & = \paren*{1 - \paren*{\frac{D}{24s\beta_{max}}}^{s/2}}^n \\
        & = \paren*{1 - \paren*{\frac{\sigma^2}{n}}^{s/(s+2)}}^n \\
        & \leq \exp\paren*{- n\paren*{\frac{\sigma^2}{n}}^{s/(s+2)}} \\
        & = \exp\paren*{ - \sigma^{2s/(s+2)}n^{2/(s+2)}}.
    \end{align*}
    This allows us to bound
    $$
    E(\partition) \leq \norm{f}_{\infty}^2 \exp\paren*{ - \sigma^{2s/(s+2)}n^{2/(s+2)}}.
    $$
    Plugging these values into \eqref{eq:simplified_bias_variance_upper_bound} completes the proof.
\end{proof}

\begin{lemma}[Change of measure] \label{lem:change_of_measure}
    Let $\measure$ be a distribution on $[0,1]^\dim$. Let $\cell \subset [0,1]^\dim$ be a cell such that $\measure$ has density $q(x)$ on $\cell$ satisfying
    $\displaystyle\min_{\bx \in \cell} q(\bx) \geq q_{min}$. Then we have
    \begin{equation}
        \Var_\nu\braces*{f(\bx)~|~\bx \in \cell}\nu\braces{\cell} \geq q_{min}\Var_\mu\braces*{f(\bx)~|~\bx \in \cell}\mu\braces{\cell}.
    \end{equation}
\end{lemma}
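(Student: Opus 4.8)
The plan is to reduce the inequality to a pointwise comparison of integrands via the variational characterization of variance: for any probability measure $\rho$ and any $g \in L^2(\rho)$,
$$
\Var_\rho\braces*{g} = \min_{c \in \R} \E_\rho\braces*{(g-c)^2},
$$
with the minimum attained at $c = \E_\rho\braces*{g}$. I would apply this to the conditional laws $\nu|_{\cell}$ and $\mu|_{\cell}$; since $f$ is (in all our applications) bounded on the compact cell $\cell$, these minima are attained and finite.

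Since $\nu$ restricted to $\cell$ has density $q$ and $\mu$ is Lebesgue measure on the unit cube, the conditional density of $\nu|_\cell$ is $q/\nu\braces{\cell}$ and that of $\mu|_\cell$ is $1/\mu\braces{\cell}$; multiplying through by $\nu\braces{\cell}$ and $\mu\braces{\cell}$ cancels these normalizers, giving
$$
\Var_\nu\braces*{f(\bx)~|~\bx\in\cell}\,\nu\braces{\cell} = \min_{c\in\R}\int_{\cell}\paren*{f(\bx)-c}^2 q(\bx)\,d\bx,
\qquad
\Var_\mu\braces*{f(\bx)~|~\bx\in\cell}\,\mu\braces{\cell} = \min_{c\in\R}\int_{\cell}\paren*{f(\bx)-c}^2 d\bx.
$$
For every fixed $c$, the bound $q(\bx) \geq q_{min}$ together with $\paren*{f(\bx)-c}^2 \geq 0$ gives $\int_{\cell}\paren*{f(\bx)-c}^2 q(\bx)\,d\bx \geq q_{min}\int_{\cell}\paren*{f(\bx)-c}^2 d\bx$; taking the minimum over $c$ on both sides (valid because the inequality holds for each $c$ separately) yields exactly the claimed inequality. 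The degenerate cases $q_{min}=0$ or $\nu\braces{\cell}=0$ are trivial since the right-hand side is then zero.

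I do not expect a genuine obstacle here; the only point that needs care is the choice of representation for the variance. The other standard identity $\Var_\rho\braces*{g} = \tfrac12\,\E_{\rho\otimes\rho}\braces*{(g(\bx)-g(\bx'))^2}$ would force a comparison of the product $q(\bx)q(\bx')$ against a single normalizer $\nu\braces{\cell}$, which points in the wrong direction (indeed $\nu\braces{\cell}\ge q_{min}\mu\braces{\cell}$); the single-centering-constant form needs only one factor of $q$ against one factor of $\nu\braces{\cell}$, so it goes through cleanly. This is the analogue, for general densities, of the reduction \eqref{eq:lower_bound_conditional_variane_rate} used in the rate-distortion argument, and it is what lets the subsequent covering proof work with the uniform measure on $\mathcal{K}$.
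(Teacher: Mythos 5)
Your proof is correct and follows essentially the same route as the paper: write the weighted conditional variance as an unnormalized integral against $q$, apply the pointwise bound $q \geq q_{min}$, and use the fact that the mean minimizes expected squared deviation to pass from the $\nu$-centering to the $\mu$-centering. Your explicit use of the variational characterization $\Var_\rho\braces*{g}=\min_c \E_\rho\braces*{(g-c)^2}$ makes precise the step the paper leaves implicit when it switches centering constants.
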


\begin{proof}
    We compute
    \begin{align*}
        \Var_\nu\braces*{f(\bx)~|~\bx \in \cell}\nu\braces{\cell} & = \int_{\bx \in \cell} \paren*{f(\bx) - \E\braces*{f(\bx)~|~\bx \in \cell}}^2 q(\bx) d\bx \\
        & \geq q_{min} \int_{\bx \in \cell} \paren*{f(\bx) - \E\braces*{f(\bx)~|~\bx \in \cell}}^2 d\bx \\
        & \geq q_{min} \int_{\bx \in \cell} \paren*{f(\bx) - \E\braces*{f(\bx)~|~\bx \in \cell}}^2 d\bx \\
        & = q_{min}\Var_\mu\braces*{f(\bx)~|~\bx \in \cell}\mu\braces{\cell}.
    \end{align*}
\end{proof}

\begin{lemma}[Restricting to sub-rectangle] \label{lem:restriction_to_sub_rectangle}
    Let $\measure$ be a distribution on $[0,1]^d$. Let $\cell_1 \subset \cell_2 \subset [0,1]^d$ be nested cells. Then
    \begin{equation}
        \Var_\nu\braces*{f(\bx)~|~\bx \in \cell_1}\nu\braces{\cell_1} \leq \Var_\nu\braces*{f(\bx)~|~\bx \in \cell_2}\nu\braces{\cell_2}
    \end{equation}
\end{lemma}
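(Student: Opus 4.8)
The plan is to prove the statement using the variational characterization of the conditional mean: among all constants $c$, the choice $c = \E_\nu\braces*{f(\bx)~|~\bx \in \cell}$ is the minimizer of $\int_{\cell}\paren*{f(\bx)-c}^2\,d\nu(\bx)$. Write $A = \cell_1 \subseteq B = \cell_2$, set $\bar f_A \coloneqq \E_\nu\braces*{f(\bx)~|~\bx \in A}$ and $\bar f_B \coloneqq \E_\nu\braces*{f(\bx)~|~\bx \in B}$, and note that (as in the equivalent representations discussed after Theorem~\ref{thm:bias_variance_permissible}) $\Var_\nu\braces*{f(\bx)~|~\bx \in \cell}\nu\braces{\cell} = \int_{\cell}\paren*{f(\bx)-\E_\nu\braces*{f(\bx)~|~\bx \in \cell}}^2 d\nu(\bx)$.

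First I would bound the left-hand side by suboptimally centering at $\bar f_B$ instead of $\bar f_A$: since $\bar f_A$ is the minimizing constant on $A$,
$$
\Var_\nu\braces*{f(\bx)~|~\bx \in A}\nu\braces{A} = \int_A \paren*{f(\bx)-\bar f_A}^2 d\nu(\bx) \leq \int_A \paren*{f(\bx)-\bar f_B}^2 d\nu(\bx).
$$
Next, because $A \subseteq B$ and the integrand $\paren*{f(\bx)-\bar f_B}^2$ is nonnegative, enlarging the domain of integration only increases the integral:
$$
\int_A \paren*{f(\bx)-\bar f_B}^2 d\nu(\bx) \leq \int_B \paren*{f(\bx)-\bar f_B}^2 d\nu(\bx) = \Var_\nu\braces*{f(\bx)~|~\bx \in B}\nu\braces{B},
$$
where the last equality is just the definition of the weighted conditional variance on $B$. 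Chaining these two displays gives the claim.

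I do not anticipate a genuine obstacle here; the argument is essentially immediate. The only points needing care are that $f$ is square-integrable over $B$ against $\nu$ so that all quantities are finite (which holds since $f$ is continuous and $B$ is compact) and that $\nu\braces{B} > 0$ so the conditional expectation is defined — and if $\nu\braces{\cell_2} = 0$ then $\nu\braces{\cell_1} = 0$ as well and both sides are zero. An equivalent route is the law of total variance: decompose $B$ into the pieces $A$ and $B \setminus A$, condition on which piece contains $\bx$, and drop the nonnegative "variance of the conditional means" term; but the variational argument above is cleaner and avoids bookkeeping on the complementary region $B \setminus A$.
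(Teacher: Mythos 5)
Your argument is correct and is the natural fleshing-out of what the paper leaves implicit (its "proof" is a one-line deferral to the change-of-measure lemma, i.e., write $\Var_\nu\braces*{f(\bx)\mid\bx\in\cell}\nu\braces{\cell}$ as the integral $\int_{\cell}(f(\bx)-\bar f_{\cell})^2\,d\nu(\bx)$ and compare). The two steps you use --- replacing the optimal center $\bar f_A$ by the suboptimal $\bar f_B$, then enlarging the domain of a nonnegative integrand --- are exactly the intended argument, and your handling of the degenerate case $\nu\braces{\cell_2}=0$ is a reasonable extra precaution.
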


\begin{proof}
    This is proved similarly to the previous lemma.
\end{proof}

\end{document}